\journal{Theoretical Computer Science}
\newtheorem{theorem}{Theorem}[section]
\newtheorem{corollary}[theorem]{Corollary} 
\newtheorem{lemma}[theorem]{Lemma}
\newtheorem{proposition}[theorem]{Proposition}
\theoremstyle{definition}
\theoremstyle{remark}
\newtheorem{remark}[theorem]{Remark}
\newtheorem{example}[theorem]{Example}
\numberwithin{equation}{section}
\newtheorem{definition}[theorem]{Definition}
\newcommand{\R}{{\mathbb R}}
\newcommand{\I}{{\mathbb I}}
\newcommand{\E}{{\mathbb E}}
\newcommand{\N}{{\mathbb N}}
\newcommand{\ve}{{\varepsilon}}
\def\norm #1{{\left\Vert\,#1\,\right\Vert}}
\newcommand{\abs}[1]{\lvert#1\rvert}
\def\fr(#1/#2){{^{\mbox{$_#1$}}\!/_{#2}}}
\font\mathf=cmex10
\def\va{\hbox{\mathf\char'76}}
\def\pipe{\hbox{${\raise.28em\va\atop\raise.50em\va}$}}
\begin{document}

\begin{frontmatter}

\title{A learning problem whose consistency is equivalent to the non-existence of real-valued measurable cardinals
}

\author[mymainaddress,mysecondaryaddress]{Vladimir G. Pestov
}

\address[mymainaddress]{Instituto de Matem\'atica e Estat\'\i stica, Universidade Federal da Bahia,
Ondina, 40.170-115,
Salvador, BA, Brasil \footnote{Professor Visitante Titular.}}
\address[mysecondaryaddress]{Department of Mathematics and Statistics,
       University of Ottawa,
       Ottawa, ON K1N 6N5, Canada \footnote{Emeritus Professor.}}
\ead{vpest283@uottawa.ca}

\begin{abstract}
We show that the $k$-nearest neighbour learning rule is universally consistent in a metric space $X$ if and only if it is universally consistent in every separable subspace of $X$ and the density of $X$ is less than every real-measurable cardinal. In particular, the $k$-NN classifier is universally consistent in every metric space whose separable subspaces are sigma-finite dimensional in the sense of Nagata and Preiss if and only if there are no real-valued measurable cardinals. 
The latter assumption is relatively consistent with ZFC, however the consistency of the existence of such cardinals cannot be proved within ZFC. 
Our results were inspired by an example sketched by C\'erou and Guyader in 2006 at an intuitive level of rigour.
\end{abstract}

\begin{keyword}
$k$-NN learning rule \sep universal consistency \sep sigma-finite metric dimension \sep Borel probability measures \sep real-valued measurable cardinals
\MSC[2010] 62H30 \sep 54F45 \sep 03E55
\end{keyword}

\end{frontmatter}

\section{Introduction}
The default model of statistical learning assumes that datapoints belong to a standard Borel space, whose measurable sigma-algebra is generated by a complete separable metric, and the learning rule -- the mapping associating a hypothesis to every sample -- is measurable in some sense. However, it certainly makes sense to push the limits of the model by dropping some of the restrictions and studying the consequences.
An interesting recent work by Ben-David, Hrube{\v s}, Moran, Shpilka, and Yehudayoff \cite{BHMSY1} (see \cite{BHMSY2} for a more detailed exposition) illustrates what happens if the requirement of measurability of the learning rule is dropped. In this case, there is a learning problem -- the Expectation Maximization (EMX) problem -- whose consistency in the Euclidean domain is equivalent to a version of the Continuum Hypothesis, and thus independent of the standard axioms ZFC of the Zermelo--Fraenkel set theory with the Axiom of Choice. That a solution to the EMX problem cannot be Borel measurable, was proved by Hart \cite{hart}. (In the Appendix below, we show that such a solution cannot even be Lebesgue measurable.) 
Thus, the independence of the EMX learning problem of ZFC could be an artefact of a model allowing non-measurable learning rules.

In this work, we arrive at a somewhat similar phenomenon by relaxing a different assumption on the learning model, that of separability of the domain. Our  problem is the classical $k$-nearest neighbour supervised learning rule in a metric space. However, we do not require the metric spaces to be separable, that is, admitting a dense countable subset. In this setting, the $k$-NN learning rule, in order to be univerally consistent, requires an additional assumption on top of the ZFC: that of the non-existence of real-valued measurable cardinals. It is consistent with ZFC to assume such cardinals do not exist, but it cannot be proved within ZFC that their existence is consistent. (It is still theoretically possible that there is a proof they do not exist.) It is worth stressing that our results are already meaningful for learning in subspaces of such a ``mundane'' metric space as $\ell^{\infty}$, the space of all bounded sequences of reals with the supremum distance.

The $k$-nearest neighbour classification rule is well studied for finite-dimensional Euclidean spaces, where especially important classical contributions have been made in \cite{CH} and \cite{stone}. While there has been some recent work on the $k$-NN classifier in separable metric spaces \cite{CG,CKP}, there has been virtually no research in the non-separable setting. 
The only attempt we know of, is an informal example discussed by C\'erou and Guyader in \cite{CG} at an intuitive level of rigour. The present article has grown out of our attempt to understand the example and explore the idea on a more rigorous mathematical footing. 

The article \cite{CG} has been a major advance in the theory of the $k$-NN classifier in separable metric spaces. In Section 2.2, the authors argue that the separability assumption on the metric space is necessary, and to make their point, they propose the following example. We reproduce it {\em verbatim} from the article.

\begin{quote}
``Let us define a distance d on [0, 1] as follows:
\[d(x,x^\prime)=\begin{cases}
0&\mbox{ if }x=x^\prime \\
1&\mbox{ if }xx^\prime=0\mbox{ and }x\neq x^\prime\\
2&\mbox{ if }xx^\prime\neq 0\mbox{ and }x\neq x^\prime.
\end{cases}\]
Since the triangle inequality holds, $d$ is a distance on $[0,1]$. But $([0,1],d)$ is clearly not separable. 

Let $\mu$ be the simple distribution defined as follows: with probability one half, one picks the origin $0$; with probability one half, one picks a point uniformly in $[0,1]$. Mathematically speaking, if $\lambda_{[0,1]}$ denotes the Lebesgue measure on $[0,1]$ and $\delta_0$ the Dirac measure at the origin:
\[\mu=\frac 12 \delta_0 + \frac 12 \lambda_{[0,1]}.\]
The way to attribute a label to a point $x\in [0,1]$ is deterministic: if $x=0$ then $y=0$; if $0<x\leq 1$ then $y=1$. As $Y$ is a deterministic function of $X$, the Bayes risk $L^{\ast}$ is equal to zero. Nevertheless, it is intuitively clear that the asymptotic probability of error with the nearest neighbors rule does not converge to $0$:
\[\lim_{n\to\infty} \E [L_n]=\frac 12 >0.\]
So the nearest neighbors classifier is not weakly consistent in this context, although we are in finite dimension.''
\end{quote}

Thus, the metric $d$ defined on the interval $[0,1]$ is discrete: every point of $(0,1]$ is at a distance $1/2$ from zero, and the pairwise distances between distinct points in $(0,1]$ are all equal to one. According to the Law of Large Numbers, a random $n$-sample will contain $n/2\pm\ve$ datapoints equal to zero. As $n\to\infty$ and $k/n\to 0$, with high confidence, an overwhelming proportion among the $k$ nearest neighbours of any point $x\in [0,1]$ will be equal to zero, so labelled $0$, and the majority vote will return the value $0$ for the predicted label of $x$. In the limit $n\to\infty$, the classifier returned by the $k$-nearest neighbour rule will converge in probability to the classifier identically taking value $0$ at every point. The probability of misclassification of a random point $X\sim \mu$ will converge to $1/2$. 

Now, some criticism of the example. First,
it is not very clear in what sense the metric space $([0,1],d)$ is of ``finite dimension.'' This is actually a very interesting issue, to be discussed later on. But more immediate is the observation that the Lebesgue measure is only defined on the sigma-algebra of Lebesgue measurable sets, while the Borel sigma-algebra generated by the discrete metric $d$ contains all subsets of $[0,1]$. In what sense then to understand the uniform measure $\lambda$? This measure, and the entire example, would only make sense if the Lebesgue measure can be extended over the sigma-algebra of all subsets of the unit interval.

The possibility of such an extension was one of the questions asked early on in measure theory, soon after Lebesgue's early work. 
For an excellent account of the subsequent developments, we refer to Fremlin's survey article \cite{fremlin}. See also \cite{jech}, especially Chapter 10.

The {\em Banach--Ulam problem} asked for a description of all probability measures on the sigma-algebra $2^X$ of all subsets of a given set $X$ in terms enabling one to decide, in case $X=\R$ (or $X=[0,1]$, since we will be talking about probability measures), whether they are extensions of the Lebesgue measure. 
A cardinal $\tau$ is {\em real-valued measurable} if there exists a probability measure on the sigma-algebra of all subsets of $\tau$ such that all singletons have measure zero, and the ideal of null sets is closed under unions of $<\tau$ of its members. According to Ulam's theorem \cite{ulam}, if such a measure space is in addition non-atomic, then $\tau$ does not exceed the cardinality of the continuum, and in this case $\tau$ is called an {\em atomlessly measurable} cardinal. 
If there exists a measure on $\tau$ as above having atoms, then $(\tau,2^{\tau})$ admits a probability measure vanishing on the singletons, whose ideal of null sets is closed under unions of $<\tau$ elements, and which only takes two values, $0$ and $1$. In this case the cardinal $\tau$ is called {\em measurable} (or {\em two-valued measurable}). Every such cardinal is (much) greater than the continuum.

Another result by Ulam \cite{ulam} states that the Lebesgue measure admits an extension over the sigma-algebra of all subsets of $\R$ if and only if there exists an atomlessly measurable cardinal. And it is relatively consistent with ZFC to assume that the real-valued measurable cardinals do not exist (that is, if the system ZFC is consistent, then the system ZFC + ``there are no real-valued measurable cardinals'' is consistent as well). It is a consequence of the following two results. Again according to Ulam \cite{ulam}, every real-valued measurable cardinal $\tau$ is weakly inaccessible, that is, an uncountable regular limit cardinal (in other words, the cofinality of $\tau$ is equal to $\tau$, and $\tau=\aleph_{\tau}$). At the same time, it is consistent with ZFC to assume that weakly inaccessible cardinals do not exist (see e.g. \cite{HJ}, page 279).
Thus, it is consistent with ZFC to assume that the Lebesgue measure cannot be extended to a sigma-additive measure on all subsets of the real line.
Furthermore, it is not possible to prove, within ZFC, that the existence of the real-valued measurable cardinals is relatively consistent with ZFC (see Th. 12.12 in \cite{jech}), and the same applies to the atomlessly measurable and two-valued measurable cardinals separately. In fact, the statements about the existence of a real-valued measurable cardinal, an atomlessly-measurable cardinal, and a two-valued measurable cardinal are all equiconsistent according to a theorem by Solovay (\cite{solovay}; see also Th. 22.1 in \cite{jech}), that is, if ZFC plus any one of them is relatively consistent, then ZFC plus all of them is relatively consistent as well.
The possibility of a proof that such cardinals do not exist has not been discarded, however, quoting from \cite{fremlin}, ``at present almost no-one is seriously searching for a proof in ZFC that real-valued measurable cardinals do not exist.''

Thus, the above example of C\'erou and Guyader remains valid under the assumption of existence of an atomlessly measurable cardinal. 
On the other hand, assuming there are no such cardinals, every Borel probability measure on a metric space of cardinality continuum will be necessarily supported on a separable subspace (as follows e.g. from Th. 11.10 in \cite{GP}), in which case it is easy to see the $k$-NN classifier in the example will be consistent. We conclude that the example of C\'erou and Guyader holds if and only if there exists an atomlessly measurable cardinal.

Those ideas go further than a single example.
Let us remind that a learning rule is {\em universally consistent} if for any distribution of the set of labelled points, as $n\to\infty$, the misclassification error of the rule converges to the minimal possible error, the Bayes error. (The exact definitions are given later in the article.) The most important single result about the $k$-NN classifier is probably the 1977 theorem of Stone \cite{stone} asserting that, under the assumptions $k,n\to\infty$, $k/n\to 0$, the $k$-NN classification rule in the finite-dimensional Euclidean space $\R^d$ is universally consistent. Thanks to \cite{CG} and \cite{preiss83}, this result has been extended to a wide subclass of separable metric spaces.

Now we are able to convert the above idea of C\'erou and Guyader into the following central result of the article.
\vskip .2cm
\noindent
{\bf Theorem \ref{th:central}.} {\em
Let $\Omega$ be a metric space. The $k$-nearest neighbour classifier is universally consistent in $\Omega$ if and only if it is universally consistent in every separable subspace of $\Omega$, and the density of $\Omega$ is less than the smallest real-valued measurable cardinal.}
\vskip .2cm

The proof in the case where $d(\Omega)\geq\tau$, where $\tau$ is a two-valued measurable cardinal, is based on a Ramsey property of such cardinals (which allows to extract from $\Omega$ a metric subspace of large cardinality whose metric takes a constant value between different points), followed by a modification of the argument by C\'erou and Guyader. In the case of atomlessly measurable cardinal $\tau$, the argument is different, and is based on the following technical observation, of interest on its own. 
\vskip .2cm
\noindent
{\bf Theorem \ref{th:maharam_type}.} {\em
Let $(\Omega,{\mathcal A},\mu)$ be a probability space admitting a learning rule consistent for every measurable regression function. Then the Maharam type of the measure algebra $({\mathcal A},\mu\circ\Delta)$ does not exceed the cardinality of $\Omega$.}
\vskip .2cm

Now one concludes by the Gitik--Shelah theorem \cite{GH}.

One may argue that the problem disappears if we only consider regular probability measures on the metric spaces, in the sense that every Borel set is approximated from within to any given measure with compact sets. However, there is no logical reason to impose this restriction. The model of learning makes perfect sense in the most general measurable space (as we discuss below), and a specification to the case of Borel spaces generated by a metric does not impose any regularity conditions.

Finally, it is very interesting to examine the meaning in which the interval $[0,1]$ with a discrete metric ``is of finite dimension.'' Clearly, it should be a concept of dimension of a metric space relevant for the consistency of the $k$-NN classifier. And such a concept indeed exists.
C\'erou and Guyader \cite{CG} have in particular shown, with the help of results of Preiss \cite{preiss83}, that the $k$-NN classification rule is universally consistent in a separable metric space whenever it is {\em sigma-finite dimensional,} in the following sense.
The {\em Nagata dimension} \cite{nagata} of a subset $X$ in a metric space $\Omega$ is less than or equal to $\delta$ if there is a value $s>0$ (a scale) such that every finite family of closed balls in $\Omega$ with centres in $X$ and of radii $<s$ admits a subfamily of multiplicity $\leq\delta+1$ that covers the centres of all the original balls. A metric space $\Omega$ is sigma-finite dimensional if it can be covered by countably many subsets each having finite Nagata dimension in $\Omega$. 

We now have:
\vskip .2cm
\noindent
{\bf Corollary \ref{c:sigma-finite}.} {\em
Let $\Omega$ be a metric space each of whose separable subspaces is sigma-finite dimensional in the sense of Nagata and Preiss. Then the $k$-NN classifier is universally consistent in $\Omega$ if and only if the density of $\Omega$ is strictly less than the smallest real-valued measurable cardinal.}
\vskip .2cm

Getting back to the example of C\'erou and Guyader, it is easy to see that it has Nagata dimension zero on the scale $1/2$. Another interesting example is the metric hedgegog $J(\tau)$ of spininess $\tau$, that is, the union of $\tau$ copies of the unit interval $[0,1]$, glued at the left endpoint, and equipped with the geodesic metric. The hedgehog has infinite Nagata dimension, however it is sigma-finite dimensional, so the $k$-NN classifier is universally consistent in this space if and only if $\tau$ is less than any real-valued measurable cardinal.

All this of course vindicates the vision of C\'erou and Guyader in the sense that, whether or not the real-valued measurable cardinals exist, it only makes sense to study the universal consistency of the $k$-NN classifier in metric spaces in the separable case. If the density of the space is smaller than each measurable cardinal, the consistency of the rule is fully determined by what happens in the separable subspaces. If not, the $k$-NN classifier will not be universally consistent no matter what.

\section*{Acknowledgement}
The author wants to thank Samuel Gomes da Silva for his suggestion to present the work of Ben-David, Hrube{\v s}, Moran, Shpilka, and Yehudayoff \cite{BHMSY1,BHMSY2} as a minicourse at the Logic, Set Theory, and Topology Week of the 2020 Summer School at the Federal University of Bahia. The first part of the minicourse covered the central result of the above authors, and an extended write-up of the second part forms the present article.

\section{Learning rules}

\subsection{Learning in a measurable space\label{subs:learning}}
Let $\Omega=(\Omega,{\mathcal A})$ be a measurable space, that is, a set equipped with a sigma-algebra of subsets $\mathcal A$. The product $\Omega\times \{0,1\}$ now becomes a measurable space in a natural way. The elements $x\in \Omega$ are known as {\em unlabelled points}, and elements $(x,y)\in \Omega\times \{0,1\}$ are {\em labelled points.} A finite sequence of labelled points, $\sigma=(x_1,x_2,\ldots,x_n,y_1,y_2,\ldots,y_n)\in \Omega^n\times \{0,1\}^n$, is a {\em labelled sample.}

A {\em classifier} in $X$ is a mapping
\[T\colon \Omega\to \{0,1\},\]
assigning a label to every point. The mapping is usually assumed to be measurable (or, more generally, universally measurable, that is, measurable with regard to the intersection of all possible completions of the sigma-algebra). 
This assumption is necessary in order for things like the misclassification error to be well defined, although some authors are allowing for non-measurable maps, working with the outer measure instead.

Let $\tilde\mu$ be a probability measure defined on the measurable space $\Omega\times \{0,1\}$. Denote $(X,Y)$ a random element of $\Omega\times \{0,1\}$ following the law $\tilde\mu$. The misclassification error of $T$ is the quantity
\begin{align*}
\mbox{err}_{\tilde\mu}(T) &= \tilde\mu \{(x,y)\in \Omega\times \{0,1\}\colon T(x)\neq y\} \\
&= P[T(X)\neq Y].
\end{align*}

A {\em learning rule} in $(\Omega,{\mathcal A})$ is a mapping, $\mathcal L$, that, when shown a labelled sample, $\sigma$, produces a classifier. In other words, a learning rule determines a label of a point $x$ on the basis of a labelled learning sample $\sigma$:
\[{\mathcal L}\colon\bigcup_{n=1}^{\infty}\Omega^n\times \{0,1\}^n\times \Omega \ni (\sigma,x)\mapsto {\mathcal L}(\sigma)(x) \in \{0,1\}.\]
Again, the map above is usually assumed to be (universally) measurable with regard to the product sigma-algebra. 

The labelled datapoints are modelled by a sequence of independent, identically distributed random elements $(X_n,Y_n)\in \Omega\times \{0,1\}$ following the law $\tilde\mu$. For each $n$, the {\em misclassification error} of the rule $\mathcal L$ restricted to $\Omega^n\times \{0,1\}^n$ (which we denote ${\mathcal L}_n$) is the value
\begin{align*}
\mbox{err}_{\tilde\mu}{\mathcal L}_n & = (\tilde\mu^n\otimes \tilde\mu)\{(\sigma,x,y)\colon {\mathcal L}_n(\sigma)(x)\neq y\} \\
&= P\left[ {\mathcal L}_n(\sigma)(X)\neq Y\right] \\
&= \E_{\sigma\sim\tilde\mu^n}\mbox{err}_{\tilde\mu}({\mathcal L}(\sigma)).
\end{align*}
The misclassification error cannot be smaller than the {\em Bayes error,} which is the infimum (in fact, the minimum) of the errors of all the classifiers $T$ defined on $\Omega$:
\[\ell^{\ast}=\ell^{\ast}(\tilde\mu)=\inf_{T}{\mathrm{err}}_{\tilde\mu}(T).\]

Define the measure $\mu = \tilde\mu\circ\pi^{-1}$, where $\pi$ is the first coordinate projection of $\Omega\times \{0,1\}$. This is a probability measure on $(\Omega,{\mathcal A})$. Now define a finite measure $\mu_1$ on $\Omega$ by $\mu_1(A)=\tilde\mu(A\times\{1\})$. Clearly, $\mu_1$ is absolutely continuous with regard to $\mu$. Define the {\em regression function}, $\eta\colon \Omega\to [0,1]$, as the corresponding Radon--Nikod\'ym derivative
\begin{align*}
\eta(x) & =\frac{d\mu_1}{d\mu} \\
&= P[Y=1\mid X=x],
\end{align*}
that is, the conditional probability for $x$ to be labelled $1$. (For the Radon--Nikod\'ym theorem in our abstract setting, see \cite{fremlin2}, 232E and 232B.) 

Given a classifier $T=\chi_C$, the Bayes error can be written as
\begin{equation}
\ell^{\ast} = \int_C(1-\eta)\,d\mu + \int_{\Omega\setminus C}\eta\,d\mu.
\label{eq:expressionforBayes}
\end{equation}

Now it is easy to see that the Bayes error $\ell^{\ast}$ is achieved at exactly those classifiers $T$ satisfying
\[T(x) = \begin{cases} 1,&\mbox{ for $\mu$-almost all }x\mbox{ such that }\eta(x)>\frac 12,\\
0,&\mbox{ for $\mu$-almost all }x\mbox{ such that }\eta(x)<\frac 12.
\end{cases}
\]

A rule $\mathcal L$ is {\em consistent} under $\tilde\mu$ if
\[\mbox{err}_{\tilde \mu}{\mathcal L}_n \to \ell^{\ast}(\tilde\mu),\]
and {\em universally consistent} if $\mathcal L$ is consistent under every probability measure on $\Omega\times \{0,1\}$.

Notice that since the regression function $\eta$, together with the measure $\mu$, allows to fully reconstruct the measure $\tilde\mu$, a learning problem in a measurable space $(\Omega,{\mathcal A})$ can be alternatively given either by the measure $\tilde\mu$ or by the pair $(\mu,\eta)$.

The usual setting for statistical learning is a standard Borel space as $\Omega$. In other words, the sigma-algebra $\mathcal A$ consists of all Borel sets generated by some complete separable metric on $\Omega$. However, apriori there are no restrictions for studying the learning problems in more general measurable spaces. In this note, we will concentrate on a particular situation where the sigma-algebra is the Borel sigma-algebra generated by a metric, but the metric space $\Omega$ is not necessarily separable.

\subsection{The $k$ nearest neighbour classification rule}

The $k$-NN classifier in $\Omega$ is a learning rule, defined by selecting the label ${\mathcal L}_n(\sigma)(x)\in\{0,1\}$ for a point $x$ on the basis of a labelled $n$-sample $\sigma=(x_1,x_2,\ldots,x_n,y_1,y_2,\ldots,y_n)$, $x_i\in\Omega$, $y_i\in\{0,1\}$, by the majority vote among the values of $y_i$ corresponding to the $k=k_n$ nearest neighbours of $x$ in the learning sample $\sigma$. 

There is an issue of possibly occurring ties, which come in two types. One is the voting tie, when $k$ is even and we may have a split vote. This can be broken, in fact, in any way, without affecting the consistency of the classifier. For instance, one can always choose the label $1$ (as we do below), or just assign the label in a random way.

It may also be that there are more than $k$ neighbours of $x$ within $\sigma$ that are at the same distance. This requires a tie-breaking rule. 
Given $k$ and $n\geq k$, define 
\begin{equation}
r^{\varsigma_n}_{k\mbox{\tiny -NN}}(x)=\min\{r\geq 0\colon \sharp\{i=1,2,\ldots,n\colon x_i\in \bar B_r(x)\}\geq k\}.
\label{eq:rknn}
\end{equation}
In other words, this is the smallest radius of a closed ball around $x$ containing at least $k$ nearest neighbours of $x$ in $\varsigma_n$.

A {\em $k$ nearest neighbour mapping} is a function 
\[k\mbox{-NN}^{\sigma}\colon\Omega^n\times\Omega\to\Omega^k\]
which selects a $k$-subsample $k\mbox{-NN}^{\sigma}(x) \sqsubset \sigma$ so that 
\begin{enumerate}
\item
all elements of $k\mbox{-NN}^{\sigma}(x)$ are at a distance $\leq r^{\varsigma_n}_{k\mbox{\tiny -NN}}(x)$ from $x$, and
\item
all points $x_i$ in $\sigma$ that are at a distance strictly less than $r^{\varsigma_n}_{k\mbox{\tiny -NN}}(x)$ to $x$ are in $k\mbox{-NN}^{\sigma}(x)$. 
\end{enumerate}

The $k$ nearest neighbour mapping $k\mbox{-NN}^{\sigma}$ can be deterministic or stochastic, in which case it will depend on an additional random variable, independent of the sample path. A typical example of the kind would be to give the sample $\sigma$ a random order, under a uniform distribution on the group of $n$-permutations, and break the distance ties by selecting among the tied neighbours on the sphere the smallest ones under the order selected.

Here is a formal definition of the $k$-NN learning rule:
\begin{eqnarray*}
{\mathcal L}^{k\mbox{\tiny -NN}}_n(\sigma)(x) &=&
\theta\left[\frac 1k\sum_{x_i\in k\mbox{\tiny -NN}^{\sigma}(x)}y_i-\frac 12 \right].
\end{eqnarray*}
Above, $\theta$ is the Heaviside function:
\[\theta(t) =\begin{cases} 1,&\mbox{ if }t\geq 0,\\
0,&\mbox{ if }t<0.\end{cases}\]

The $k$-NN rule was historically the first classification learning rule in a standard Borel space whose universal consistency was established, by Charles J. Stone \cite{stone}.

\begin{theorem}[C.J. Stone, 1977] The $k$-nearest neighbour classifier is universally consistent in the finite-dimensional Euclidean space whenever $k=k_n\to\infty$, $k/n\to 0$.
\end{theorem}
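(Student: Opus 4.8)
The plan is to follow the classical route through Stone's theorem on locally weighted averages, reducing the consistency of the classifier to the $L^2$-consistency of the associated regression estimate. Write $\hat\eta_n(x)=\frac 1k\sum_{x_i\in k\mbox{\tiny -NN}^{\sigma}(x)}y_i$ for the empirical label-average over the $k$ nearest neighbours, so that the rule is exactly the plug-in classifier $\mathcal L^{k\mbox{\tiny -NN}}_n(\sigma)(x)=\theta[\hat\eta_n(x)-\tfrac 12]$. The first step is the standard plug-in inequality: the excess risk of the classifier induced by any estimate $\hat\eta_n$ of the regression function $\eta$ is controlled by the $L^1$-error of that estimate, whence by Cauchy--Schwarz
\[
\mathrm{err}_{\tilde\mu}\mathcal L_n-\ell^{\ast}\;\leq\;2\,\E\,\abs{\hat\eta_n(X)-\eta(X)}\;\leq\;2\left(\E\,\abs{\hat\eta_n(X)-\eta(X)}^2\right)^{1/2},
\]
where the expectation is over both the sample and the test point $X\sim\mu$. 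It therefore suffices to prove that $\E\int\abs{\hat\eta_n-\eta}^2\,d\mu\to 0$.

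Second, I would cast $\hat\eta_n$ as a weighted average $\hat\eta_n(x)=\sum_{i=1}^n W_{ni}(x)y_i$, with $W_{ni}(x)=1/k$ when $x_i$ is one of the $k$ nearest neighbours of $x$ and $W_{ni}(x)=0$ otherwise, and invoke Stone's general consistency theorem for such estimators. That theorem yields the required $L^2$-consistency provided the weights satisfy three conditions: (i) a uniform averaging bound $\E\sum_{i=1}^n W_{ni}(X)f(X_i)\leq c\,\E f(X)$ for some constant $c$ and every nonnegative measurable $f$; (ii) a localization condition, $\sum_{i=1}^n W_{ni}(X)\,\mathbb{1}_{\{\norm{X_i-X}>a\}}\to 0$ in probability for every $a>0$; and (iii) a vanishing-maximum condition, $\max_{1\leq i\leq n}W_{ni}(X)\to 0$ in probability.

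Third, I would verify these for the $k$-NN weights. Condition (iii) is immediate, since $\max_i W_{ni}=1/k\to 0$ because $k\to\infty$. Condition (ii) follows from the fact that the distance $r^{\varsigma_n}_{k\mbox{\tiny -NN}}(X)$ to the $k$-th nearest sample point tends to $0$ in probability when $k/n\to 0$: for $\mu$-almost every $X$ the closed ball $\bar B_a(X)$ has positive $\mu$-measure, so the law of large numbers guarantees it eventually captures far more than $k$ sample points, forcing all $k$ neighbours within radius $a$. The crux is condition (i), and this is the single place where finite-dimensionality is indispensable. It reduces to Stone's geometric lemma: there is a constant $\gamma_d$ depending only on $d$ such that any fixed point of $\R^d$ is among the $k$ nearest neighbours of at most $\gamma_d k$ of the $n$ sample points. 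The lemma is proved by covering $\R^d$ with a fixed finite number $\gamma_d$ of cones of angular diameter less than $60^{\circ}$ and apex at the given point; within one cone the sample point nearer to the apex lies closer to any farther point than the apex does, so the apex can be one of the $k$ nearest neighbours of at most $k$ of the points of that cone. Summing the weight $1/k$ over these at most $\gamma_d k$ points yields $c=\gamma_d$.

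The main obstacle is precisely this covering lemma: it is where the hypothesis of finite dimension enters, and it is exactly what fails in the non-separable and infinite-dimensional settings motivating the rest of the paper. A secondary technical point is the careful handling of distance ties, which occur with positive probability for measures with atoms; these are absorbed by the randomized tie-breaking rule described above, and one checks that both the cone argument and the averaging bound survive the randomization. Once condition (i) is established, Stone's theorem delivers $\E\int\abs{\hat\eta_n-\eta}^2\,d\mu\to 0$, and the plug-in inequality of the first step completes the proof.
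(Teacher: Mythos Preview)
Your outline is the standard, correct route to Stone's theorem: the plug-in excess-risk bound, followed by Stone's general $L^2$-consistency criterion for weighted averages, with the three conditions verified via $k\to\infty$, $k/n\to 0$, and the cone-covering (geometric Stone) lemma in $\R^d$. The handling of ties and the identification of the cone argument as the sole place where finite dimension enters are both accurate.

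There is, however, nothing in the paper to compare it to. The paper does not prove this theorem; it merely states it as a classical result and cites \cite{stone}. The surrounding text only uses Stone's theorem as historical and motivational background before moving on to the C\'erou--Guyader and Preiss extensions. So your write-up is not a match or a mismatch with the paper's proof---it is simply a (correct) proof sketch of a result the paper quotes without argument. If anything, the paper later alludes to exactly the ingredient you single out: in Example~\ref{ex:fdnsNd} it mentions ``a suitable extension of the geometric Stone lemma'' as the mechanism behind finite Nagata dimension of normed spaces, which is the same cone-covering idea you invoke for condition~(i).
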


\section{Metric spaces and measures}

\subsection{Metric dimension}
The $k$-NN classifier is no longer universally consistent in more general separable metric spaces, in fact not even in the infinite-dimensional Hilbert space $\ell^2$. An example of this kind (constructed for the needs of real analysis) belongs to Preiss \cite{Preiss_1979}. (See this example adapted for the $k$-NN classifier in \cite{CKP}, sect. 2.) 

As far as we know, there is no known criterion characterizing those separable metric spaces in which the $k$-NN classifier is universally consistent. The most general result to date belongs to C\'erou and Guyader \cite{CG}, who have shown that the $k$-NN classifier is consistent under the assumption that the regression function $\eta(x)$ satisfies the weak Lebesgue--Besicovitch differentiation property:
\begin{equation}
\frac{1}{\mu(B_r(x))}\int_{B_r(x)} \eta(x)\,d\mu(x) \overset{r\downarrow 0}\longrightarrow \eta(x),
\label{eq:lb}
\end{equation}
where the convergence is in measure. Thus, the $k$-NN classifier is universally consistent in a metric space $X$ which has the weak Lebesgue--Besicovitch differentiation property with regard to every Borel sigma-finite (equivalentely, probability) measure. 

Characterizing such metric spaces also apparently remains an open problem, as mentioned in \cite{preiss83}. However, the {\em complete} metric spaces with the {\em strong} Lebesgue--Besicovitch differentiation property (that is, the convergence in Eq. (\ref{eq:lb}) is almost everywhere) have been completely described by Preiss \cite{preiss83}. Let us first state the relevant notions of metric dimension in the sense of Nagata and Preiss.

Recall that a family $\gamma$ of subsets of a set $\Omega$ has multiplicity $\leq\delta$ if the intersection of more than $\delta$ pairwise different elements of $\gamma$ is empty. The following definition by Preiss \cite{preiss83} is a relativization of a concept due to Nagata \cite{nagata}. 

\begin{definition} 
Let $\delta$ be a natural number. Say that a metric subspace $X$ of a metric space $\Omega$ has {\em Nagata dimension $\leq\delta$ on the scale $s>0$ inside of} $\Omega$ if every finite family of closed balls in $\Omega$ with centres in $X$ admits a subfamily of multiplicity $\leq\delta+1$ in $\Omega$ which covers all the centres of the original balls. The subspace $X$ has a finite Nagata dimension in $\Omega$ if $X$ has finite dimension in $\Omega$ on some scale $s>0$. Notation: $\dim^s_{Nag}(X,\Omega)$ or sometimes simply $\dim_{Nag}(X,\Omega)$. If $\Omega$ has Nagata dimension $\delta$ as a subset of itself, one says simply that the Nagata dimension of $\Omega$ is $\delta$.
\label{d:dimnagata}
\end{definition}

\begin{example}
It is not difficult to see that a metric space has Nagata dimension $0$ on the scale $+\infty$ if and only if it is non-archimedian, that is, the metric satisfies the strong triangle inequality:
\[d(x,z)\leq \max\{d(x,y),d(y,z)\}.\]
In particular, if $\Omega$ is a set equipped with a metric only taking two values, then $\dim_{Nag}(\Omega)=0$.
\end{example}

\begin{example}
The Nagata dimension of the real line is $1$, while for the plane it is $5$. (See \cite{nagata_open}, and \cite{CKP}, Example 4.4 and Remark 4.7 for calculations.) More generally, the Nagata dimension of the Euclidean space $\ell^2(d)$ is finite for all $d$, but it is unclear if the exact values of dimension have ever been calculated (a ``possibly open question'' in \cite{nagata_open}).
\end{example}

\begin{example}
\label{ex:cerou-guyader_dimension}
The Nagata dimension of the example of C\'erou and Guyader described in the Introduction is zero on the scale $s=1/2$. More generally, any uniformly discrete metric space (that is, a space where the distances between any two distinct point are uniformly bounded away from zero) has Nagata dimension zero.
\end{example}

\begin{example}
Any finite-dimensional normed space $E$ has finite Nagata dimension. The argument is based on a suitable extension of the geometric Stone lemma from the Euclidean case \cite{stone} to the case of an arbitrary norm (\cite{duan}, Lemma 2.2.9). Cover the unit sphere of $E$ with finitely many open balls $B_i$, $i=1,\ldots,k$ each having diameter $\leq 1$. For every ball $B_i$, form the corresponding cone $C_i=\{tx\colon t\in [0,1],~x\in B_i\}$ with apex at zero. Given finitely many closed balls containing zero, for every $i$ choose among them a ball whose centre has the greatest norm among all centres contained in $C_i$. Simple geometric considerations show that this ball will contain all the centres falling inside the cone $C_i$. Thus, the $k$ selected balls cover the centres of all the original balls. Now, given any finite family of balls, we can apply the same procedure to any point of $E$ contained in more than $k$ balls, to eliminate all but $k$ of them. After finitely many steps, we arrive at a subfamily of balls of multiplicity $\leq k$ that still contains the centres of all the original balls. 
\end{example}

\begin{example}
\label{ex:fdnsNd}
Moreover, a finite dimensional normed space $E$ has finite Nagata dimension in any bigger normed space $F$, independent of the ambient space: $\dim_{Nag}(E,F)=\dim_{Nag}(E)$. This follows from the geometric observation that two balls with centres in $E$ intersect in $F$ if and only if they intersect in $E$ (a consequence of the Hahn-Banach theorem).
\end{example}

Here is another notion due to Preiss \cite{preiss83}.

\begin{definition}
A metric space $\Omega$ is {\em sigma-finite dimensional in the sense of Nagata} if $\Omega=\cup_{i=1}^{\infty}X_n$, where every subspace $X_n$ has finite Nagata dimension in $\Omega$ on some scale $s_n>0$ (where the scales $s_n$ are possibly all different).
\end{definition}

Thus, the point here is not only that $\Omega$ is a union of countably many finite-dimensional subspaces, but that they must be relatively finite dimensional inside of $\Omega$.

\begin{example}
Every countable metric space is sigma-finite dimensional, because each finite subset $X$ of a metric space $\Omega$ has a finite relative Nagata dimension in $\Omega$, not exceeding $\abs X-1$.
\end{example}

\begin{example}
\label{ex:hedgehog}
Let $\tau$ be any cardinal.
By $J(\tau)$ we denote the metric hedgehog of spininess $\tau$, that is, a sum of $\tau$ copies of the unit interval glued together at the left endpoint and equipped with the geodesic metric (the maximal metric coinciding with the usual Euclidean distance on each copy of $[0,1]$). See \cite{engelking}, Ex. 4.5.1, page 251.

The Nagata dimension of the hedgehog is infinite on any scale $s>0$: indeed, the family of balls centred at all points on the ``spines'' at a distance say $s/2$ from zero, of radius $s/2$, admits no proper subfamily containing the centres, yet has infinite multiplicity.
At the same time, the hedgehog is sigma-finite dimensional, being the union of the singleton $\{0\}$ and the complements to the open $1/n$-balls around zero, for $n=1,2,\ldots$. Indeed it is easy to see that every such complement has Nagata dimension $1$ on the scale $s=1/n$.
\end{example}

\begin{example}
\label{ex:c00}
Let $\Gamma$ be any set. The normed space $c_{00}(\Gamma)$ consists of all maps $x\colon\Gamma\to\R$ such that the set $\mbox{supp}\,x =\{\gamma\in\Gamma\colon x_{\gamma}\neq 0\}$ is finite. The norm is the supremum norm, which in this case is the maximum $\norm {x}_{\infty}=\max_{\gamma\in\Gamma}\abs{x_{\gamma}}$.
In particular, $c_{00}(\Gamma)$ contains an isometric copy of the hedgehog $J(\tau)$, where $\tau=\abs\Gamma$, as a set of all sequences $x$ satisfying $\abs{\mbox{supp}\,x }\leq 1$ and $0\leq x_{\gamma}\leq 1$ for all $\gamma$.

For every $m,n\geq 1$, define 
\[X_{m,n}=\{x\in c_{00}(\Gamma)\colon\abs{\mbox{supp}\,x}=m \wedge \forall\gamma\in \mbox{supp}\,x,~\abs{x_{\gamma}}\geq 1/n\}.\]
If now $x\in X_{m,n}$ and $r<1/n$, then the closed $r$-ball around $x$, formed in $X_{m,n}$, is entirely contained in the vector space $c_{00}(\mbox{supp}\,x)$. Thus, $X_{m,n}$ is the union of disjoint isometric copies of a certain subset of the normed space $c_{00}(m)=\ell^{\infty}(m)$, with the distances between two different copies limited from below by $2/n$. Now it easily follows from Example \ref{ex:fdnsNd} that the Nagata dimension of $X_{m,n}$ in $c_{00}(\Gamma)$ on the scale $s=1/n$ is the same as that of the normed space $\ell^{\infty}(m)$, hence finite. Since
\[c_{00}(\Gamma) = \{0\}\cup\bigcup_{m,n=1}^{\infty}X_{m,n},\]
we conclude that the metric space $c_{00}(\Gamma)$ is sigma-finite dimensional.

At the same time, no infinite-dimensional Banach space is sigma-finite dimensional, nor even metrizable with a sigma-finite dimensional metric \cite{preiss83}.
\end{example}

\begin{theorem}[D. Preiss \cite{preiss83}]
For a complete separable metric space $\Omega$, the following conditions are equivalent.
\begin{enumerate}
\item $\Omega$ has the {\em strong} Lebesgue--Besicovitch differentiation property with regard to every sigma-finite locally finite Borel measure,
\item $\Omega$ is sigma-finite dimensional in the sense of Nagata.
\end{enumerate}
\end{theorem}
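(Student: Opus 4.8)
The plan is to prove the two implications separately, with $(2)\Rightarrow(1)$ a routine adaptation of Euclidean covering theory and the converse $(1)\Rightarrow(2)$ the substantial part. For $(2)\Rightarrow(1)$ I would first reduce to a single piece. Write $\Omega=\bigcup_{n}X_n$ with each $X_n$ of finite Nagata dimension $\delta_n$ on a scale $s_n$ inside $\Omega$. Since a countable union of $\mu$-null sets is $\mu$-null, it suffices to establish, for a fixed locally integrable $f$, that the differentiation limit
\[\lim_{r\downarrow 0}\frac{1}{\mu(B_r(x))}\int_{B_r(x)} f\,d\mu = f(x)\]
holds for $\mu$-almost every $x\in X_n$, for each $n$ separately. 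The key observation is that the defining property of Nagata dimension $\leq\delta_n$ on scale $s_n$ is exactly a finite Besicovitch covering property: any finite family of closed balls of radii $<s_n$ centred in $X_n$ can be thinned to a subfamily of multiplicity $\leq\delta_n+1$ still covering every centre.

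From this finite covering property I would pass, by a standard exhaustion over a countable dense subset (here completeness and separability of $\Omega$ enter), to a covering theorem for arbitrary families, and thence to a weak-type $(1,1)$ inequality for the truncated maximal operator $M_{s_n}f(x)=\sup_{0<r<s_n}\mu(B_r(x))^{-1}\int_{B_r(x)}\abs{f}\,d\mu$ restricted to centres in $X_n$. The differentiation property then follows by the classical two-step argument: bounded continuous functions differentiate correctly at every point by continuity and local finiteness of $\mu$; such functions are dense in $L^1_{loc}(\mu)$; and the maximal inequality shows that the sublinear differentiation-defect functional vanishes $\mu$-almost everywhere on $X_n$ for every locally integrable $f$. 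Taking $f=\chi_A$ for Borel $A$ yields the strong Lebesgue--Besicovitch property for $\mu$, and since $\mu$ was arbitrary among sigma-finite locally finite Borel measures, this establishes $(1)$.

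The converse I would prove by contraposition: assuming $\Omega$ is \emph{not} sigma-finite dimensional, I would construct a single sigma-finite locally finite Borel measure $\mu$ and a Borel set $A$ for which the averages $\mu(A\cap B_r(x))/\mu(B_r(x))$ fail to converge to $\chi_A(x)$ on a set of positive $\mu$-measure, violating the strong differentiation property already for $f=\chi_A$. The failure of sigma-finite dimensionality must first be reformulated as a positive, quantitative statement: at arbitrarily fine scales there exist finite ball families whose centres admit no covering subfamily of any prescribed bounded multiplicity, i.e.\ genuinely high-overlap configurations that persist as the scale tends to zero and cannot be absorbed into countably many finite-dimensional pieces. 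I would then build $\mu$ as a discrete measure supported on the centres of such configurations chosen along a decreasing sequence of scales $r_j\downarrow 0$, nesting the layers and alternating the membership of the centres between $A$ and its complement from one layer to the next, so that the high multiplicity at scale $r_j$ forces the ball-average of $\chi_A$ at that scale to stay bounded away from $\chi_A(x)$.

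The main obstacle is exactly this converse construction, the technical heart of Preiss's argument. Three points require care: (i) the failure of sigma-finite dimensionality must be quantified uniformly enough to be iterated across infinitely many scales simultaneously; (ii) the nested configurations must be assembled so that the resulting $\mu$ is genuinely sigma-finite and locally finite, rather than collapsing to a finite or non-sigma-finite object; and (iii) the interaction between consecutive scales must be controlled so that the overlaps at scale $r_j$ actually produce averages bounded away from the target on a set of uniformly positive measure, rather than being cancelled by the contributions of finer or coarser layers. Verifying these simultaneously --- essentially a multi-scale combinatorial-geometric construction dressed as a measure --- is where the real work lies; once it is done, the positive-measure set of non-differentiation points contradicts $(1)$ and the contrapositive is complete.
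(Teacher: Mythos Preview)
The paper does not contain a proof of this theorem. It is stated as a result of Preiss \cite{preiss83} and attributed entirely to that reference; the paper even remarks immediately afterward that ``the proof of Preiss was just a brief sketch, and for finite dimensional spaces in the sense of Nagata, the proof, in the sufficiency direction, was elaborated by Assouad and Quentin de Gromard in \cite{AG}.'' There is therefore nothing in the present paper to compare your proposal against.

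That said, your outline is broadly in line with how the result is understood in the literature. The direction $(2)\Rightarrow(1)$ via a Besicovitch-type covering property derived from finite Nagata dimension, leading to a weak-type maximal inequality and then to almost-everywhere differentiation, is the approach of Assouad and Quentin de Gromard. The direction $(1)\Rightarrow(2)$ by contraposition, building a measure witnessing failure of differentiation from infinitely many scales of high-multiplicity ball configurations, is indeed the substantial part of Preiss's argument, and you have correctly identified where the difficulties lie. One caution: in your sketch of $(2)\Rightarrow(1)$ you invoke density of bounded continuous functions in $L^1_{\mathrm{loc}}(\mu)$, but for a general Borel measure on a metric space this requires some form of regularity; on a complete separable space this is available, but it is worth flagging that this is where separability and completeness are actually used, not merely in passing to a countable dense set for the covering argument.
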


Equivalently, one can only consider in (1) the Borel probability measures. 

Thus, the following result holds.

\begin{theorem}[C\'erou--Guyader \cite{CG}]
\label{th:cerou_guyader}
The $k$-nearest neighbour classifier is universally consistent in every complete separable metric space that is sigma-finite dimensional in the sense of Nagata.
\end{theorem}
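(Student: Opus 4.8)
The plan is to obtain the statement as a direct synthesis of the two results immediately preceding it: Preiss's characterization of the strong Lebesgue--Besicovitch differentiation property, and the consistency criterion of Cérou and Guyader recalled above in this subsection. No new geometry is required; all the work lies in checking that the hypotheses of the two black boxes line up.

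First I would fix an arbitrary Borel probability measure $\tilde\mu$ on $\Omega\times\{0,1\}$ and pass to the data on $\Omega$ that actually drive the $k$-NN rule: the marginal $\mu=\tilde\mu\circ\pi^{-1}$, which is a Borel probability measure on $\Omega$, and the regression function $\eta=d\mu_1/d\mu\colon\Omega\to[0,1]$ introduced in Section~\ref{subs:learning}. Universal consistency means consistency under every such $\tilde\mu$, so it suffices to establish $\mathrm{err}_{\tilde\mu}{\mathcal L}_n\to\ell^\ast(\tilde\mu)$ for this fixed but arbitrary $\tilde\mu$.

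Next I would feed $\mu$ into Preiss's theorem. Since $\Omega$ is complete, separable, and sigma-finite dimensional in the sense of Nagata, that theorem (in the ``probability measure'' form noted just after its statement) guarantees the strong Lebesgue--Besicovitch differentiation property with regard to $\mu$: for every $\mu$-integrable function the ball averages converge $\mu$-almost everywhere to the function. The regression function $\eta$ is bounded, hence lies in $L^1(\mu)$, so the property applies to $\eta$ and yields almost-everywhere convergence in Eq.~(\ref{eq:lb}). Because $\mu$ is finite, almost-everywhere convergence of the averages implies convergence in measure, which is precisely the weak form of the differentiation property. Finally I would invoke the criterion of Cérou and Guyader \cite{CG}: once the regression function satisfies the weak Lebesgue--Besicovitch property (convergence in measure in Eq.~(\ref{eq:lb})), the $k$-NN rule with $k_n\to\infty$, $k_n/n\to 0$ is consistent under $\tilde\mu$. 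As $\tilde\mu$ was arbitrary, universal consistency follows.

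The main obstacle is not any single deep step but the bookkeeping at the interfaces: confirming that the measure class in Preiss's theorem (sigma-finite, locally finite Borel measures) may be narrowed to Borel probability measures without loss --- the parenthetical remark after that theorem --- and that the almost-everywhere statement genuinely covers the particular function $\eta$ and downgrades to the convergence-in-measure hypothesis demanded in \cite{CG}. One should also verify that the admissible range of $k_n$ and the tie-breaking conventions are compatible between the two sources, but these are routine once the differentiation property is in hand.
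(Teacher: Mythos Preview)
Your proposal is correct and matches the paper's approach exactly: the paper does not give a separate proof but simply writes ``Thus, the following result holds'' after stating Preiss's theorem and the C\'erou--Guyader differentiation criterion, making the theorem a direct corollary of combining the two. Your write-up spells out precisely this synthesis, including the passage from strong (a.e.) to weak (in measure) convergence and the reduction to probability measures, so there is nothing to add.
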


It is perhaps worth noting that the proof of Preiss was just a brief sketch, and for finite dimensional spaces in the sense of Nagata, the proof, in the sufficiency direction, was elaborated by Assouad and Quentin de Gromard in \cite{AG}. They did not assume the completeness, and established the result for a wider class of distances than just metrics.

A direct proof of the above theorem along the same lines as the original proof by Charles Stone \cite{stone}, and without the completeness assumption, can be found in \cite{CKP}.

\subsection{Borel measures on metric spaces}

Let us recall that a cardinal number $\tau$ is called {\em real-valued measurable} if a set of cardinality $\tau$ admits a sigma-additive probability measure defined on the family of all subsets and such that the measure of every singleton is zero, and the ideal of negligeable sets is closed under unions of $<\tau$ members. In particular, a cardinal $\tau$ admits a probability measure vanishing on singletons if and only if $\tau$ is greater than or equal to some real-valued measurable cardinal. It is relatively consistent with ZFC to assume that real-valued measurable cardinals do not exist. At the same time, the consistency of the existence of real-valued measurable cardinals cannot be proved in ZFC. 

The support of a probability measure $\mu$ on a metric space $\Omega$ is the closed set
\[\mbox{supp}\,\mu =\{x\in\Omega\colon \forall \ve>0,~\mu(B_{\ve}(x))>0\}.\]
The {\em density} of a metric space $\Omega$, denoted $d(\Omega)$, is the smallest cardinality of a dense subset of $\Omega$.

We will say that a Borel probability measure $\mu$ on a metric space $\Omega$ is {\em regular} if for each Borel subset $B\subseteq\Omega$ and every $\ve>0$ there is a closed precompact subset $K\subseteq B$ with $\mu(B\setminus K)<\ve$.

The following is well known, but we could not find a single reference where the equivalences were put together.

\begin{theorem}
\label{th:measures_metric_space}
Let $X$ be a metric space. The following conditions are equivalent.
\begin{enumerate} 
\item\label{rvm:1}
The support of every Borel probability measure $\mu$ on $X$ has full measure.
\item \label{rvm:2}
Every Borel probability measure on $X$ is supported on a separable subspace (that is, there is a separable subspace of measure one).
\item\label{rvm:2a}
Every Borel probability measure on $X$ is regular (as defined above).
\item\label{rvm:3} 
The density of $X$ is strictly less than any real-valued measurable cardinal.
\end{enumerate}
\end{theorem}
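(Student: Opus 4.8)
\medskip
\noindent\textbf{Proof plan.}
The four conditions split into a ``soft'' topological part relating $(1)$, $(2)$, $(3)$ and a set-theoretic core tying them to $(4)$. I would establish the cycle $(4)\Rightarrow(2)\Rightarrow(1)\Rightarrow(4)$ together with the side equivalence $(2)\Leftrightarrow(3)$; once each arrow is available, all four conditions are equivalent. The only genuinely non-trivial arrows are $(4)\Rightarrow(2)$ and $(1)\Rightarrow(4)$, both of which must invoke the defining property of real-valued measurable cardinals recalled above: a cardinal carries a countably additive probability measure on its full power set vanishing on singletons precisely when it is $\geq$ some real-valued measurable cardinal.

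The soft implications I would dispatch first. For $(2)\Rightarrow(1)$, if $S$ is separable with $\mu(S)=1$, then $S$ is Lindel\"of, so the open set $X\setminus\supp\mu$, being a union of balls of measure zero, meets $S$ in a set coverable by countably many null balls; hence $\mu(X\setminus\supp\mu)=\mu((X\setminus\supp\mu)\cap S)=0$. For $(2)\Rightarrow(3)$ I would run the classical Ulam tightness argument inside $S$: covering $S$ by countably many $1/n$-balls and retaining finitely many whose union has measure $>1-\varepsilon 2^{-n}$, the intersection over $n$ is a closed totally bounded, hence precompact, set of measure $>1-\varepsilon$; combined with inner approximation of Borel sets by closed sets (valid in any metric space) this yields regularity. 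The converse $(3)\Rightarrow(2)$ is immediate, since precompact sets are separable: taking precompact $K_m$ with $\mu(K_m)>1-1/m$ makes $\overline{\bigcup_m K_m}$ a separable subspace of full measure.

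The crux is $(4)\Rightarrow(2)$. Here I would combine A.\,H.\ Stone's theorem — every open cover of a metric space admits a $\sigma$-discrete open refinement — with the cardinal hypothesis. Fix $n$ and refine the cover of $X$ by all balls of radius $1/n$ to a $\sigma$-discrete open refinement $\bigcup_m \mathcal W_m$, every member having diameter $\le 2/n$. Within a single discrete family $\mathcal W_m$ the members are pairwise disjoint, so at most countably many carry positive measure; the union $N_m$ of the null members is open, and pushing $\mu$ forward to the index set of $\mathcal W_m$ along the (disjoint, open, hence Borel) members gives a countably additive, singleton-null measure on the full power set of a set of size $\le d(X)$. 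Were $\mu(N_m)>0$, normalisation would make this a probability measure, forcing $d(X)\ge$ some real-valued measurable cardinal and contradicting $(4)$; hence $\mu(N_m)=0$ and $\mu(\bigcup_m N_m)=0$. Thus $\mu$-almost every point lies in a positive-measure member; choosing one point from each such member (countably many, over all $m$) yields a countable $C_n$ with $\mu\{x:d(x,C_n)>2/n\}=0$, so $C=\bigcup_n C_n$ is countable and $\overline C$ has full measure. The main obstacle lives exactly here: the naive attempt to tile $X$ by disjoint positive-measure balls founders on the factor-of-two gap between the radius forcing disjointness and the radius forcing covering, and on the fact that arbitrary unions of disjoint \emph{Borel} pieces need not be Borel. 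Passing to a $\sigma$-discrete open refinement resolves both at once, since its families are genuinely disjoint and their arbitrary subunions stay open.

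Finally $(1)\Rightarrow(4)$, which I would prove contrapositively. If $d(X)\ge\tau$ for some real-valued measurable cardinal $\tau$, then, $\tau$ being regular and uncountable (it is weakly inaccessible), not every $1/n$-separated subset can have size $<\tau$ for all $n$ — otherwise a countable union of maximal such sets would be a dense set of size $<\tau$. Hence some $\varepsilon$-separated $Y\subseteq X$ has $|Y|\ge\tau$; I fix $Y'\subseteq Y$ of size the least real-valued measurable cardinal $\tau_0$ and transport a witnessing singleton-null probability measure $\nu$ on $\mathcal P(Y')$ to $X$ by $\mu(B)=\nu(B\cap Y')$, a Borel probability measure. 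Since $Y'$ is $\varepsilon$-separated, every ball of radius $\varepsilon/2$ meets $Y'$ in at most one point, so $\mu(B_{\varepsilon/2}(x))=0$ for every $x\in X$; thus $\supp\mu=\emptyset$ and $(1)$ fails, completing the cycle.
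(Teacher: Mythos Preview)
Your argument is correct and runs on essentially the same engine as the paper's, namely the existence of $\sigma$-discrete open families in metric spaces, but you wire the implications in the opposite direction. The paper proves $(1)\Rightarrow(2)$ directly, then $(2)\Rightarrow(2a)\Rightarrow(1)$, then $(2)\Rightarrow(3)$ by the same $\varepsilon$-separated construction you use for $(1)\Rightarrow(4)$, and finally $(3)\Rightarrow(1)$ by contraposition via Bing's metrization theorem ($\sigma$-discrete base). You instead run the cycle $(4)\Rightarrow(2)\Rightarrow(1)\Rightarrow(4)$, invoking A.~H.~Stone's $\sigma$-discrete refinement theorem for the hard arrow $(4)\Rightarrow(2)$; since Bing's $\sigma$-discrete base is obtained precisely from Stone's theorem, this is the same tool deployed at a slightly different node. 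Your route has the mild advantage that it establishes separable support directly rather than via full-measure support, and your $(2)\Rightarrow(1)$ through Lindel\"ofness is a touch cleaner than the paper's explicit $(1)\Rightarrow(2)$ construction. One small wording issue: in $(4)\Rightarrow(2)$ the pushforward you describe is singleton-null only once restricted to $N_m$ (equivalently, to the indices of the null members), not on the full index set of $\mathcal W_m$; your next sentence makes clear this is what you intend, but you should say so explicitly.
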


\begin{proof}
(\ref{rvm:1})$\Rightarrow$(\ref{rvm:2}): Fix $\ve>0$ and $\delta>0$. Under the assumption (\ref{rvm:1}), there is a finite subset $F({\ve},\delta)$ whose $\delta$-neighbourhood, $F({\ve},\delta)_{\delta}$, formed in $X$ has measure $>1-\ve$. The set $\cup_{m,n\geq 1}F({1/n},{1/m})$ is countable and its closure in $X$ has full measure:
\begin{align*}
\overline{\cup_{m,n\geq 1}F({1/n},{1/m})}&=
\bigcap_{k\geq 1}\left(\cup_{m,n\geq 1}F({1/n},{1/m})\right)_{1/k} 
\\
&\supseteq \bigcap_{m\geq 1}\left(\cup_{n\geq 1}F({1/n},{1/m}) \right)_{1/m},
\end{align*}
and the latter set has measure $1$.

(\ref{rvm:2})$\Rightarrow$(\ref{rvm:2a}): this implication reduces to a classical result in the case of complete separable metric spaces, in which case $K$ can be chosen compact. Suppose $Y=\mbox{supp}\,\mu$ is separable. View $\mu$ as a Borel probability measure on the completion $\hat Y$ via the rule $\hat\mu(A)=\mu(A\cap Y)$ for each Borel $A\subseteq Y$. There is a compact $K\subseteq \hat Y$ with $\mu(K)>1-\ve/2$. Let $C\subseteq Y$ be Borel. By another standard regularity result, there is a subset $F\subseteq C$ closed in $Y$ with $\mu C\setminus Y)<\ve/2$. The subset $F\cap K$ is precompact, closed in $Y$, and satisfies $\mu(C\setminus (F\cap K))<\ve$.

(\ref{rvm:2a})$\Rightarrow$(\ref{rvm:1}): by assumption, there is a sequence of precompact subsets whose union, $Y$ has full measure. This $Y$ is a separable metric subspace of full measure. The set $Y\setminus \mbox{supp}\,\mu$ can be covered by countably many balls of measure zero each, hence is a null set. We conclude: $\mu(\mbox{supp}\,\mu)\geq \mu(Y)=1$.

(\ref{rvm:2})$\Rightarrow$(\ref{rvm:3}): Contraposition. Select a subset $Y\subseteq X$ with the property that $\tau=\abs Y$ is real-valued measurable. Let $\mu$ be a probability measure defined on the sigma-algebra of all subsets of $Y$ and vanishing on singletons. Extend $\mu$ to a probability measure on the Borel sigma-algebra of $X$ (indeed, on the sigma-algebra of all subsets of $X$) by letting $\mu(A)=\mu(A\cap Y)$ for every $A\subseteq X$. For every separable subset $Z\subseteq X$ one has $\mu(Z) = \mu(Z\cap Y) =\sum_{z\in Z\cap Y}\mu(z)=0$.

(\ref{rvm:3})$\Rightarrow$(\ref{rvm:1}): by contraposition. We follow the argument in \cite{fremlin}, page 59, proof of Th. 6M. Suppose $\mu$ is a Borel probability measure on $X$ having the property $\mu(\mbox{supp}\,(\mu))<1$. Set $U=X\setminus \mbox{supp}\,(\mu)$. As each point of $U$ has a neighbourhood of zero measure, we can assume, by passing to a smaller open subspace if necessary, that there is $\ve>0$ with the property that for each $x\in U$, $\mu(B_{\ve}(x))=0$.
According to the Bing metrization theorem (see e.g. \cite{engelking}, Sect. 5.4), there is a sequence ${\mathcal O}_n$ of discrete families of open sets (each point $x\in\Omega$ has a neighbourhood only meeting at most one element of ${\mathcal O}_n$) whose union is a topological base for $U$. Clearly, we can assume that every element of this base is contained in one of the open balls of radius $\ve$.
For each $n$, define $U_n = \cup\{V\in {\mathcal O}_n\colon V\subseteq U\}$. Since $\cup U_n=U$, for some $n$ we have $\mu(U_n)>0$. Now define a measure on the sigma-algebra of all subsets of ${\mathcal O}_n$ by
\[\nu(\mathcal V) = \mu(\cup {\mathcal V})\mu(U_n)^{-1}.\]
This is a probability measure defined on all subsets of ${\mathcal O}_n$ and vanishing on singletons. Therefore, the cardinality of ${\mathcal O}_n$ is greater than or equal to a real-valued measurable cardinal. At the same time, this cardinality cannot exceed the density of $\Omega$.
\end{proof}

\section{Learning and non-separability}

If $(\Omega,{\mathcal A})$ is a measurable space, then a {\em measurable subspace} of $\Omega$ will denote any subset $Y\in {\mathcal A}$ (not necessarily measurable), equipped with the sigma-algebra 
\[{\mathcal A}\vert_Y=\{A\cap Y\colon A\in {\mathcal A}\}.\]
If now $\mathcal L$ is a learning rule for $\Omega$, we define its restriction ${\mathcal L}\vert_Y$ to $Y$ in an obvious way: for every $n\geq 1$,
\[({\mathcal L}\vert_Y)_n = {\mathcal L}_n\vert_{Y^n\times \{0,1\}^n\times Y}.\]

In the context of learning in the classical standard Borel space setting, the following result is something so obvious that it will probably be never stated explicitely and proved. Since we are working in the context of general measure spaces, it is better to verify all the details.

\begin{lemma}
\label{l:subspace}
Let $(\Omega,{\mathcal A})$ be a measurable space admitting a universally consistent learning rule $\mathcal L$. Then the restriction of $\mathcal L$ to any subset $Y\subseteq\Omega$ is a universally consistent learning rule for the measurable subspace $(Y,{\mathcal A}\vert_Y)$.
\end{lemma}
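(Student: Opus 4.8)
The plan is to reduce universal consistency of the restricted rule on $(Y,\mathcal{A}\vert_Y)$ to the universal consistency of $\mathcal L$ on $(\Omega,\mathcal A)$, by transporting an arbitrary measure on the subspace up to the whole space along the inclusion $Y\hookrightarrow\Omega$. Fix a probability measure $\tilde\nu$ on $(Y,\mathcal A\vert_Y)\times\{0,1\}$; I must show $\mathrm{err}_{\tilde\nu}((\mathcal L\vert_Y)_n)\to\ell^{\ast}(\tilde\nu)$. The first step is to define the companion measure $\tilde\mu$ on $(\Omega,\mathcal A)\times\{0,1\}$ by $\tilde\mu(C)=\tilde\nu(C\cap(Y\times\{0,1\}))$. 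Since $\mathcal A\vert_Y$ is the trace sigma-algebra and the trace of a product sigma-algebra is the product of the traces, the set $C\cap(Y\times\{0,1\})$ always lies in the domain of $\tilde\nu$, so $\tilde\mu$ is well defined; it is exactly the pushforward of $\tilde\nu$ under the inclusion, hence a genuine probability measure.

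The heart of the argument is that $\tilde\mu$ ``lives on $Y$'' at the level of all finite powers. Because pushforward commutes with the formation of finite products of probability measures, for every $n$ and every set $E$ measurable in $(\Omega\times\{0,1\})^{n+1}$ one has
\[(\tilde\mu^n\otimes\tilde\mu)(E)=(\tilde\nu^n\otimes\tilde\nu)\big(E\cap(Y\times\{0,1\})^{n+1}\big),\]
the right-hand side being well defined again by the trace-of-product identity. I would verify this first on measurable rectangles, where it is immediate from the definition of $\tilde\mu$, and then extend it by the uniqueness of product measures.

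Next I relate the errors. By the very definition of the restriction, whenever the sample and the test point lie in $Y$ one has $\mathcal L_n(\sigma)(x)=(\mathcal L\vert_Y)_n(\sigma)(x)$, so the error set of $\mathcal L_n$ intersected with $(Y\times\{0,1\})^{n+1}$ is precisely the error set of $(\mathcal L\vert_Y)_n$. Applying the displayed identity to $E$ equal to the error set of $\mathcal L_n$ gives $\mathrm{err}_{\tilde\mu}(\mathcal L_n)=\mathrm{err}_{\tilde\nu}((\mathcal L\vert_Y)_n)$ for every $n$. For the Bayes errors I would set up a correspondence between classifiers: restricting a measurable $T\colon\Omega\to\{0,1\}$ to $Y$ yields $\mathrm{err}_{\tilde\mu}(T)=\mathrm{err}_{\tilde\nu}(T\vert_Y)$ directly from the definition of $\tilde\mu$, whence $\ell^{\ast}(\tilde\nu)\le\ell^{\ast}(\tilde\mu)$; conversely, any measurable $S\colon Y\to\{0,1\}$ has $S^{-1}(1)=C\cap Y$ for some $C\in\mathcal A$, and $T=\chi_C$ then extends $S$ with the same error, giving $\ell^{\ast}(\tilde\mu)\le\ell^{\ast}(\tilde\nu)$. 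Hence $\ell^{\ast}(\tilde\mu)=\ell^{\ast}(\tilde\nu)$. Combining the two equalities with the universal consistency of $\mathcal L$ applied to $\tilde\mu$ gives $\mathrm{err}_{\tilde\nu}((\mathcal L\vert_Y)_n)=\mathrm{err}_{\tilde\mu}(\mathcal L_n)\to\ell^{\ast}(\tilde\mu)=\ell^{\ast}(\tilde\nu)$, as required, and $\tilde\nu$ was arbitrary.

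The one genuinely delicate point -- and the only place where the argument is more than bookkeeping -- is that $Y$ is not assumed to belong to $\mathcal A$. This means I cannot simply restrict a measure from $\Omega$ to $Y$; the measure must be built on the subspace and pushed up. All the care therefore goes into checking that passing to the trace sigma-algebra $\mathcal A\vert_Y$ loses nothing: that $\tilde\mu$ is a bona fide measure, that $(\mathcal A\vert_Y)^{\otimes m}$ coincides with the trace of $\mathcal A^{\otimes m}$ on $Y^m$ (so that the restricted rule's error set is measurable and the displayed product identity makes sense), and that pushforward commutes with finite products. Each of these is standard once isolated, so I expect no real obstruction, only the need to be scrupulous about measurability since we are outside the standard Borel setting.
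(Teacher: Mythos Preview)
Your argument is correct and complete. The overall architecture matches the paper's---push an arbitrary law on $Y$ up to $\Omega$, check that the empirical errors and the Bayes errors coincide, then invoke universal consistency of $\mathcal L$---but the implementation differs in a way worth noting.

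The paper works in the $(\mu,\eta)$ representation: it pushes the marginal $\mu_Y$ to $\mu_\Omega$, then invokes a measurable-envelope extension result (Fremlin, 214E(b)) to extend the regression function $\eta_Y$ to some $\eta_\Omega$ with matching integrals, and only then reconstructs $\tilde\mu_\Omega$ from the pair $(\mu_\Omega,\eta_\Omega)$. The equality of Bayes errors is then read off from the integral formula for $\ell^\ast$. You instead stay with the joint law $\tilde\nu$ on $Y\times\{0,1\}$ throughout and push it forward directly; the equality of Bayes errors falls out of the elementary bijection between classifiers on $Y$ and their measurable extensions to $\Omega$. Your route is shorter and avoids both the Radon--Nikod\'ym step and the extension lemma for $\eta$; the paper's route is perhaps more natural if one already has the $(\mu,\eta)$ description in hand, and it makes the envelope structure explicit, which some readers may find reassuring in the non-standard-Borel setting. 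Either way, the substantive content---that $Y$ need not lie in $\mathcal A$, so one must push measures up rather than restrict them down, and that traces of product $\sigma$-algebras behave well---is the same, and you have identified it correctly.
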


\begin{proof}
Let $Y=(Y,{\mathcal A}\vert_Y)$ be a measurable subspace of $\Omega$, and $\mu_Y$ a probability measure on $Y$. Define a probability measure $\mu_{\Omega}$ on $\Omega$: if $A\in {\mathcal A}$, then 
\[\mu_{\Omega}(A) = \mu_Y(A\cap Y).\]
Notice that the outer measure $\mu_{\Omega}^{\ast}(B)$ of every element $B$ of ${\mathcal A}_Y$ equals $\mu_Y(B)$. Consequently, the set $\Omega$ forms a {\em measurable envelope} of $Y$ in the measure space $(\Omega,{\mathcal A},\mu_{\Omega})$ in the sense of \cite{fremlin1}, 132D. Namely: $Y\subseteq\Omega$ and for every $A\in {\mathcal A}$, one has $\mu_{\Omega}(A\cap \Omega)=\mu_{\Omega}^{\ast}(A\cap Y)$.

Let $\eta_{Y}\colon Y\to [0,1]$ be a ${\mathcal A}_Y$-measurable function (a regression function for a learning problem in $Y$). There exists a measurable extension, $\eta_{\Omega}$, of $\eta_Y$ over $(\Omega,{\mathcal A})$, such that 
$\int_A\eta_{\Omega}\,d\mu_{\Omega}=\int_{A\cap Y}\eta_Yd\mu_Y$ for every $A\in {\mathcal A}$ (see \cite{fremlin2}, Proposition 214E(b)). Now it follows from Eq. (\ref{eq:expressionforBayes}) that the learning problems $(\mu_Y,\eta_Y)$ in $(Y,{\mathcal A}_Y)$ and the learning problem $(\mu_\Omega,\eta_\Omega)$ in $(\Omega,{\mathcal A})$ have the same Bayes error.

Now define a probability measure $\tilde\mu_{\Omega}$ on the measure space $\Omega\times\{0,1\}$ (with a canonical product sigma-algebra) from the pair $(\mu_{\Omega},\eta_{\Omega})$ by
\[\tilde\mu_{\Omega}(A) = \int_{A\cap \Omega\times\{1\}}\eta_{\Omega}\,d\mu_{\Omega} +
\int_{A\cap \Omega\times\{0\}}(1-\eta_{\Omega})\,d\mu_{\Omega}.\]
In a similar way, define a probability measure $\tilde\mu_Y$ on $Y\times \{0,1\}$:
\[\tilde\mu_{Y}(A) = \int_{A\cap Y\times\{1\}}\eta_{Y}\,d\mu_{Y} +
\int_{A\cap Y\times\{0\}}(1-\eta_{Y})\,d\mu_{Y}.\]
It follows from the choice of $\eta_{\Omega}$ that for every measurable $A\subseteq\Omega\times\{0,1\}$, one has $\tilde\mu_{\Omega}(A) = \tilde\mu_Y(A\cap (Y\times\{0,1\}))$, and in particular $\Omega\times\{0,1\}$ is a measurable envelope of $Y\times\{0,1\}$ with regard to the measure $\tilde\mu_{\Omega}$. 

Let $n\in\N_+$. The product space $\Omega^n\times\{0,1\}^n\times\Omega\times\{0,1\}$ with the product measure $\tilde\mu_{\Omega}^{n+1}$ is, by the same argument, a measurable envelope of the subset $Y^n\times\{0,1\}^n\times Y\times\{0,1\}$ with the product measure $\tilde\mu_{Y}^{n+1}$.

Now let $\mathcal L$ be a learning rule for $(\Omega,{\mathcal A})$ consistent under the problem $(\mu_{\Omega},\eta_{\Omega})$. For every $n$, the function
\[\Omega^n\times\{0,1\}^n\times\Omega\times \{0,1\} \ni (\sigma,x,y)\mapsto \abs{{\mathcal L}_n(\sigma)(x)-y}\in\{0,1\}\]
is measurable, so
we have
\begin{align*}
\mbox{err}_{\tilde\mu_Y}{\mathcal L}\vert_Y & = (\tilde\mu_Y^n\otimes \tilde\mu_Y)\{(\sigma,x,y)\in Y^n\times \{0,1\}^n\times Y\colon {\mathcal L}_n\vert_Y(\sigma)(x)\neq y\}
 \\
&= (\tilde\mu_{\Omega}^n\otimes \tilde\mu_{\Omega})\{(\sigma,x,y)\in \Omega^n\times\{0,1\}^n\times\Omega\colon {\mathcal L}_n(\sigma)(x)\neq y\}\\
&=  
\mbox{err}_{\tilde\mu_{\Omega}}{\mathcal L}_n\\
&\overset{n\to\infty}\longrightarrow \ell^{\ast}(\tilde\mu_{\Omega},\Omega) \\
&= \ell^{\ast}(\tilde\mu_Y,Y). \\
\end{align*}
\end{proof}

Let $(X,{\mathcal A},\mu)$ be a probability space. The distance between two elements $A,B\in {\mathcal A}$, given by the expression $\mu(A\Delta B)$, is a pseudometric, and the associated metric space is ${\mathcal A}/{\mathcal N}_{\mu}$, the quotient of the Boolean algebra $\mathcal A$ (with the natural operations) by the ideal of null sets, ${\mathcal N}_{\mu}$. The algebra ${\mathcal A}/{\mathcal N}_{\mu}$ equipped with the metric corresponding to the pseudometric $\mu\circ\Delta$ is called the {\em measure algebra} of the measure space $X$. We will still denote it $({\mathcal A},\mu\circ\Delta)$.
(In fact, we will not be interested in the Boolean algebra structure.) The {\em Maharam type} of the measure algebra can be defined as the smallest cardinality of a subset generating a dense subalgebra. In particular, the Maharam type does not exceed the density of the underlying metric space. For an in-depth treatment of the Maharam type, see \cite{fremlin3}, Ch. 33.

\begin{theorem}
\label{th:maharam_type}
Let $(\Omega,{\mathcal A},\mu)$ be a probability space admitting a learning rule consistent for every measurable regression function. Then the Maharam type of the measure algebra $({\mathcal A},\mu\circ\Delta)$ does not exceed the cardinality of $\Omega$.
\end{theorem}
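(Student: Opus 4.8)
The plan is to read off a dense subset of the measure algebra $(\mathcal{A}/\mathcal{N}_{\mu},\mu\circ\Delta)$ of cardinality at most $\abs{\Omega}$ directly from the classifiers the rule produces. Write $\mathcal{L}$ for the consistent rule. Each sample $\sigma\in\bigcup_{n\geq 1}\Omega^n\times\{0,1\}^n$ yields a measurable classifier $\mathcal{L}(\sigma)\colon\Omega\to\{0,1\}$, hence a set $C_{\sigma}=\{x\colon\mathcal{L}(\sigma)(x)=1\}\in\mathcal{A}$ and an element $[C_{\sigma}]$ of the measure algebra. I would take $S=\{[C_{\sigma}]\colon\sigma\text{ a sample}\}$ and prove two things: that $\abs{S}\leq\abs{\Omega}$, and that $S$ is dense. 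Since a dense subset generates a dense subalgebra containing it, this gives Maharam type $\leq\abs{S}\leq\abs{\Omega}$.

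The cardinality bound is routine. Assuming $\Omega$ infinite (the finite case being trivial, as the algebra is then finite-dimensional), one has $\abs{\bigcup_{n}\Omega^n\times\{0,1\}^n}=\abs{\Omega}\cdot\aleph_0=\abs{\Omega}$, so there are at most $\abs{\Omega}$ samples and therefore at most $\abs{\Omega}$ elements $[C_{\sigma}]$.

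The substance is the density of $S$. Fix $A\in\mathcal{A}$ and $\ve>0$, and feed the rule the noiseless regression function $\eta=\chi_A$: the associated $\tilde\mu$ labels every point of $A$ by $1$ and every other point by $0$, so the Bayes error is $\ell^{\ast}=0$, and for any classifier $\chi_C$ the misclassification error equals $\int_C(1-\eta)\,d\mu+\int_{\Omega\setminus C}\eta\,d\mu=\mu(C\Delta A)$ (the expression of Eq.~(\ref{eq:expressionforBayes})). Consistency for this $\eta$ then reads $\E_{\sigma\sim\tilde\mu^n}\mu(C_{\sigma}\Delta A)=\mbox{err}_{\tilde\mu}\mathcal{L}_n\to 0$. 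Choosing $n$ so that this expectation is below $\ve$ and using $\mu(C_{\sigma}\Delta A)\geq 0$, there must exist a sample $\sigma$ with $\mu(C_{\sigma}\Delta A)<\ve$, i.e. $[C_{\sigma}]$ lies within $\ve$ of $[A]$. As $A$ and $\ve$ were arbitrary, $S$ is dense.

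I expect the only real content to be this last step: the recognition that consistency, tested against the deterministic labelling $\eta=\chi_A$, forces some output of the rule to approximate $A$ in the $\mu\circ\Delta$ metric, which is exactly density of the output set. Two points deserve care. First, the argument relies on $\mathcal{L}$ being deterministic (as in the definition of Section~\ref{subs:learning}), so that its outputs are indexed by samples alone; for a genuinely stochastic rule one would first fix the auxiliary randomness to keep the index set of size $\abs{\Omega}$. Second, each $C_{\sigma}$ must be measurable, which is immediate from the (universal) measurability built into the definition of a learning rule.
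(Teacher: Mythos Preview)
Your proof is correct and follows essentially the same route as the paper: test consistency against the deterministic regression $\eta=\chi_A$ to see that the image of $\mathcal{L}$ is dense in the measure algebra, then bound the set of samples by $\abs{\Omega}$. Your write-up is in fact slightly more explicit than the paper's (you spell out the expectation argument and the finite-$\Omega$ caveat), but the idea is identical.
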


\begin{proof}
Let $A\in {\mathcal A}$ be any concept. Applying our assumption to the regression function $\eta=\chi_A$, we conclude that for every $\ve>0$ there should exist at least one labelled sample $\sigma$ generating a hypothesis learning $A$ to a precision $<\ve$:
\[\mbox{err}_{\mu,\eta}({\mathcal L}(\sigma)) =\mu({\mathcal L}(\sigma)\Delta A)<\ve.\]
Consequently, the rule $\mathcal L$, viewed as a mapping
\[{\mathcal L}\colon \bigcup_{n=1}^{\infty} \Omega^n\times \{0,1\}^n\to {\mathcal A},\]
must have a dense image in the measure algebra with regard to the distance $\mu\circ\Delta$. At the same time, the cardinality of the set of all labelled samples equals the cardinality of $\Omega$, implying the result.
\end{proof}

\begin{corollary}
\label{c:atomlessly_no_rule}
Let $\tau$ be an atomlessly measurable cardinal, with a witnessing probability $\mu$. The measure space $(\tau,2^{\tau},\mu)$ admits no learning rule that is consistent for every measurable regression function.
\end{corollary}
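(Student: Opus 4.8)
The plan is to argue by contradiction, reducing everything to Theorem \ref{th:maharam_type} and then invoking the lower bound on Maharam types furnished by the Gitik--Shelah theorem. The statement is, in effect, the exact contrapositive packaging of Theorem \ref{th:maharam_type} against a deep set-theoretic input, so the reduction itself should be short; the substance is entirely imported from \cite{GH}.

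First I would suppose, toward a contradiction, that the measure space $(\tau,2^{\tau},\mu)$ does admit a learning rule $\mathcal L$ that is consistent for every measurable regression function. The underlying set here is the cardinal $\tau$ itself, so its cardinality is exactly $\tau$. Theorem \ref{th:maharam_type} then applies directly to the probability space $(\tau,2^{\tau},\mu)$ and yields that the Maharam type of the measure algebra $(2^{\tau},\mu\circ\Delta)$ is at most $\tau$.

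Next I would bring in the Gitik--Shelah theorem, whose whole point is that an atomlessly measurable cardinal cannot carry a witnessing measure of small Maharam type: for $\tau$ atomlessly measurable with witnessing probability $\mu$, the Maharam type of $(2^{\tau},\mu\circ\Delta)$ is strictly greater than $\tau$ (in fact at least $\tau^{+}$). Comparing this with the bound of the previous step gives $\tau\geq\mbox{type}(2^{\tau},\mu\circ\Delta)>\tau$, a contradiction. Hence no learning rule consistent for every measurable regression function can exist on $(\tau,2^{\tau},\mu)$.

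I expect the only genuine obstacle to be the correct invocation of the Gitik--Shelah lower bound, since that is where all the set-theoretic depth resides; once it is quoted in the form ``the Maharam type of any witnessing measure on an atomlessly measurable cardinal $\tau$ strictly exceeds $\tau$,'' the clash with Theorem \ref{th:maharam_type} is immediate and the corollary follows with no further computation. It is worth double-checking only the trivial bookkeeping point that $\abs{\tau}=\tau$, so that the cardinality appearing in Theorem \ref{th:maharam_type} is indeed the same $\tau$ against which Gitik--Shelah is applied.
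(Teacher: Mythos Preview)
Your proof is correct and follows exactly the paper's approach: apply Theorem \ref{th:maharam_type} to bound the Maharam type above by $\tau$, then invoke the Gitik--Shelah theorem to bound it strictly above $\tau$, yielding a contradiction. The only cosmetic difference is that the paper states the Gitik--Shelah lower bound more precisely as $\min\{\tau^{(+\omega)},2^{\tau}\}$ rather than $\tau^{+}$, but either form suffices here.
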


\begin{proof} 
According to the Gitik--Shelah theorem \cite{GH}, for an atomlessly measurable cardinal $\tau$, the Maharam type of the measure algebra $(2^{\tau},\mu\circ\Delta)$ is strictly greater than $\tau$ (in fact, at least $\min\{\tau^{(+\omega)},2^{\tau}\}$, see \cite{fremlin3}, page 20, Th. 3F). 
\end{proof}

\begin{corollary}
Let $\Omega$ be a metric space whose density is greater than or equal to an atomlessly measurable cardinal. Then the $k$ nearest neighbour learning rule in $\Omega$ is not universally consistent.
\end{corollary}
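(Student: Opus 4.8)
The plan is to reduce the statement to Corollary~\ref{c:atomlessly_no_rule} and Lemma~\ref{l:subspace} by carving out of $\Omega$ a uniformly discrete subspace of cardinality exactly $\tau$, where $\tau\leq d(\Omega)$ is the given atomlessly measurable cardinal. Since a universally consistent rule restricts to a universally consistent rule on every subspace, it suffices to exhibit a single subspace on which no such rule can exist.

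First I would extract that subspace. For each $n\geq 1$ choose, by Zorn's lemma, a maximal $1/n$-separated subset $D_n\subseteq\Omega$; by maximality $D_n$ is a $1/n$-net, so $\bigcup_n D_n$ is dense and hence of cardinality at least $d(\Omega)\geq\tau$. Because $\tau$ is real-valued measurable it is weakly inaccessible, in particular regular and uncountable, so a countable union cannot attain cardinality $\tau$ unless one of the $D_n$ already does. Fix such an $n$ and a subset $Y\subseteq D_n$ with $\abs Y=\tau$; then $Y$ is $1/n$-separated, i.e.\ uniformly discrete.

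The key technical observation is that the Borel subspace $\sigma$-algebra induced on $Y$ is the full power set $2^Y$. Indeed, the balls $B_{1/(2n)}(y)$, $y\in Y$, are pairwise disjoint and open in $\Omega$, and each meets $Y$ only in $y$; hence for every $S\subseteq Y$ the set $\bigcup_{y\in S}B_{1/(2n)}(y)$ is open in $\Omega$ and intersects $Y$ exactly in $S$, so $S$ lies in the subspace $\sigma$-algebra. Consequently the probability measure $\mu$ on $2^Y$ witnessing the atomless measurability of $\tau=\abs Y$ is a genuine Borel probability measure on the metric subspace $Y$, and $(Y,2^Y,\mu)$ is precisely the situation of Corollary~\ref{c:atomlessly_no_rule}: it admits no learning rule consistent for every measurable regression function, and a fortiori no universally consistent rule.

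Finally I would assemble the pieces. A universally consistent learning rule on $\Omega$ would, by Lemma~\ref{l:subspace}, restrict to a universally consistent rule on $(Y,2^Y)$, which in particular would be consistent under $\mu$ for every measurable regression function; this contradicts the previous paragraph. Hence $\Omega$ admits no universally consistent learning rule whatsoever, and the $k$-nearest neighbour rule in particular fails to be universally consistent. The one point requiring care, and the only real obstacle, is the identification of the subspace $\sigma$-algebra with $2^Y$: without it the witnessing measure would live only on a possibly much coarser algebra and Corollary~\ref{c:atomlessly_no_rule} could not be invoked; the disjoint-balls argument above is exactly what removes this difficulty, while the regularity of $\tau$ is what guarantees the extraction step succeeds.
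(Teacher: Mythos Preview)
Your proof is correct and follows essentially the same approach as the paper: extract a uniformly discrete subspace using the regularity of $\tau$, identify its Borel structure with the full power set, and combine Corollary~\ref{c:atomlessly_no_rule} with Lemma~\ref{l:subspace}. Your version is in fact slightly tidier in that you take $\abs Y=\tau$ exactly (the paper takes $\abs Y=d(\Omega)\geq\tau$ and then implicitly passes to a subset of size $\tau$ when invoking Corollary~\ref{c:atomlessly_no_rule}), you spell out the disjoint-balls argument showing the subspace $\sigma$-algebra is $2^Y$, and you observe the stronger conclusion that no learning rule whatsoever can be universally consistent on $\Omega$.
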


\begin{proof}
Suppose $d(\Omega)\geq \tau$, where $\tau$ is atomlessly measurable. Since $\tau$ is regular and uncountable, its cofinality type is uncountable too, and in particular there exist $\ve>0$ and a $\ve$-discrete subset $Y$ of $\Omega$ with $\abs Y\ = d(\Omega)\geq \tau$. The Borel structure of $Y$ is the sigma-algebra $2^Y$ of all subsets of $Y$.
Let $\mu$ be an atomless probability measure defined on $2^Y$.
By Corollary \ref{c:atomlessly_no_rule}, the $k$-NN classifier is not consistent in the space $(Y,\mu)$. We conclude by Lemma \ref{l:subspace}, because the restriction of the $k$-NN learning rule from $\Omega$ to $Y$ is the $k$-NN learning rule in $Y$.
\end{proof}

The case of two-valued measurable cardinals will be based on the following slight modification of the example by C\'erou and Guyader.

\begin{proposition}
\label{p:two-valued_non-consistent}
Let $\Omega$ be a metric space whose metric only takes two values, $\{0,r\}$, $r>0$.
The $k$-NN classifier (under the uniform distance tie-breaking) is universally consistent in $\Omega$ if and only if the cardinality of $\Omega$ is strictly less than any real-valued measurable cardinal. 
\end{proposition}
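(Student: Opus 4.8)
The plan is to prove the two implications separately; throughout I normalise $r=1$, so that distinct points of $\Omega$ are exactly at distance $1$, every subset is clopen, the Borel $\sigma$-algebra is $2^{\Omega}$, a subset is separable if and only if it is countable, and $d(\Omega)=\abs\Omega$.

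For the ``if'' direction I would assume $\abs\Omega$ is strictly less than every real-valued measurable cardinal, so that condition (\ref{rvm:3}) of Theorem \ref{th:measures_metric_space} holds. By the implication (\ref{rvm:3})$\Rightarrow$(\ref{rvm:2}) every Borel probability measure $\mu$ on $\Omega$ is concentrated on a countable set $S=\{a_1,a_2,\ldots\}$ which, after discarding null points, has $p_j:=\mu(\{a_j\})>0$ and $\sum_j p_j=1$. Fixing a regression function $\eta$, I would note that the number $M_j$ of sample points equal to $a_j$ is $\mathrm{Bin}(n,p_j)$-distributed, so $M_j\to\infty$ and, since $k_n/n\to0$, the event $M_j>k_n$ has probability tending to $1$; on it all $k_n$ nearest neighbours of $a_j$ are copies of $a_j$, whose labels are i.i.d.\ $\mathrm{Bernoulli}(\eta(a_j))$, so the majority vote converges to $\theta[\eta(a_j)-\tfrac12]$ whenever $\eta(a_j)\neq\tfrac12$. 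Hence the conditional error at $a_j$ tends to $\min\{\eta(a_j),1-\eta(a_j)\}$, and summing against the weights $p_j$ by dominated convergence yields $\mathrm{err}_{\tilde\mu}{\mathcal L}_n\to\ell^{\ast}$. (One could instead quote Theorem \ref{th:cerou_guyader} for the countable, hence sigma-finite dimensional, support $S$.)

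For the ``only if'' direction I would argue by contraposition, assuming $\abs\Omega$ is at least some real-valued measurable cardinal, so that (as recalled before the proposition) $2^{\Omega}$ carries a probability measure $\nu_0$ vanishing on all singletons. Fixing $a_0\in\Omega$ and a weight $p\in(\tfrac12,1)$, I put $\mu=p\,\delta_{a_0}+(1-p)\,\nu_0$, which gives $\mu(\{a_0\})=p$ since $\nu_0(\{a_0\})=0$, and take the deterministic labelling $\eta=\chi_{\Omega\setminus\{a_0\}}$; as the label is a deterministic function of the point, $\ell^{\ast}=0$. The crucial step is to evaluate the error while avoiding the product measure on $\Omega^{n+1}$ entirely, using the identity $\mathrm{err}_{\tilde\mu}{\mathcal L}_n=\E_{\sigma\sim\tilde\mu^n}\,\mathrm{err}_{\tilde\mu}({\mathcal L}_n(\sigma))$. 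For a fixed sample $\sigma$, the classifier ${\mathcal L}_n(\sigma)$ is a genuine $2^{\Omega}$-measurable function, and for every query lying outside the finite set $\{x_1,\ldots,x_n\}$ all $n$ sample points are tied at distance $1$; the uniform tie-breaking therefore selects a uniformly random $k_n$-subset of $\sigma$, so ${\mathcal L}_n(\sigma)$ equals a single constant $c_{\sigma}$ off the sample. Since $\nu_0$ kills finite sets, the set $\Omega\setminus(\{a_0\}\cup\{x_1,\ldots,x_n\})$ has $\mu$-measure exactly $1-p$, carries only label $1$, and is contained in the region where ${\mathcal L}_n(\sigma)=c_{\sigma}$; hence $\mathrm{err}_{\tilde\mu}({\mathcal L}_n(\sigma))\geq 1-p$ whenever $c_{\sigma}=0$.

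It then remains to show $P[c_{\sigma}=0]\to1$. Because the labelling is deterministic, $c_{\sigma}$ is simply the majority label in a uniformly random $k_n$-subset of $\sigma$, and so depends on $\sigma$ only through the integer $M=\#\{i\colon x_i=a_0\}$, a $\mu^n$-measurable statistic as $\{a_0\}\in2^{\Omega}$. Under $\mu^n$ one has $M\sim\mathrm{Bin}(n,p)$, and given $M$ the number $H$ of copies of $a_0$ among the $k_n$ chosen neighbours is hypergeometric with mean close to $k_np$; as $p>\tfrac12$ and $k_n\to\infty$ this forces $H>k_n/2$ with probability tending to $1$, i.e.\ $c_{\sigma}=0$ with probability $\to1$. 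Combining the last two paragraphs gives $\mathrm{err}_{\tilde\mu}{\mathcal L}_n\geq(1-p)\,P[c_{\sigma}=0]\to1-p>0=\ell^{\ast}$, so the rule is not consistent. I expect the main obstacle to be measure-theoretic rather than combinatorial: for a two-valued measurable cardinal the witnessing $\nu_0$ is $\{0,1\}$-valued and the diagonal of $\Omega\times\Omega$ need not belong to $2^{\Omega}\otimes2^{\Omega}$, so the naive product-space definition of the error is not measurable. The device that resolves this is exactly the reduction, via the deterministic labelling, to the single statistic $M$ and to the fixed-sample error $\mathrm{err}_{\tilde\mu}({\mathcal L}_n(\sigma))$, which never requires Fubini or the diagonal.
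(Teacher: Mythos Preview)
Your proof is correct and follows essentially the same strategy as the paper: for sufficiency, reduce via Theorem~\ref{th:measures_metric_space} to a countable (purely atomic) support and argue directly; for necessity, put a Dirac mass of weight $>1/2$ at a single point against a diffuse witnessing measure, use a deterministic regression function, and show that the majority vote at a generic query is dominated by the atom's label so that the error stays bounded away from $0=\ell^{\ast}$.

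The one substantive difference is your handling of measurability. The paper simply invokes Fubini on the product $(\tilde\mu^{n}\otimes\tilde\mu)$ without comment, whereas you correctly observe that when $\abs\Omega$ exceeds $\mathfrak c$ (as it must for a two-valued measurable cardinal) the diagonal need not lie in $2^{\Omega}\otimes 2^{\Omega}$, so the joint map $(\sigma,x)\mapsto{\mathcal L}_n(\sigma)(x)$ is not obviously product-measurable. Your workaround---computing $\mathrm{err}_{\tilde\mu}({\mathcal L}_n(\sigma))$ for each fixed $\sigma$, then noting that (with a single random ordering used for tie-breaking) this quantity depends on $\sigma$ only through the $\tilde\mu^{n}$-measurable statistic $M=\sharp\{i:x_i=a_0\}$---is a genuine improvement in rigour over the paper's argument. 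One incidental simplification you could adopt from the paper: rather than conditioning on $M$ and passing through the hypergeometric law of $H$, exchangeability of the i.i.d.\ sample under the independent uniform permutation gives directly that the $k$ selected neighbours are themselves i.i.d.\ with law $\mu$, so $H\sim\mathrm{Bin}(k,p)$ outright.
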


\begin{proof}
The necessity is proved by contraposition. Suppose $\abs\Omega\geq\tau$, where $\tau$ is real-valued measurable. Equip $\Omega$ with a probability Borel measure, $\mu$ (that is, a probability measure defined on all subsets) such that the singletons are null sets. Fix any point $y\in \Omega$. Now define a measure
\[\mu = \frac 13 \nu + \frac 23 \delta_{y},\]
and a regression function,
\[\eta = \chi_{\{y\}}.\]
In other words, with probability $1/3$ a random point $X$ will be equal to $y$ and labelled deterministically $1$, and with probability $2/3$ it will follow the law $\nu$, to be labelled $0$. 

For an i.i.d. sample $(X_1,\ldots,X_n)$ distributed according to $\mu$, the random variables $\eta(X_n)$ are i.i.d. Bernoulli random variables with the probability of success $2/3$. Given a point $x\in\Omega$, all elements of the sample are at the same distance from $x$, so the choice of $k$ nearest neighbours of $x$ is itself a random variable with values in the set $[\sigma]^k$ of all $k$-subsets of $\sigma$, following a uniform distribution. Equivalently, if $\sigma=(X_1,\ldots,X_n)$, then the $k$ nearest neighbours are the random variables $X_{\tau(1)},X_{\tau(2)},\ldots,X_{\tau(n)}$, where $\tau\in S_n$ is a random permutation. Since the i.i.d. random variables are exchangeable, that is, the joint law does not change under any permutation, it follows that the $k$ nearest neighbours are i.i.d., following the law $\mu$. 

According to the Law of Large Numbers with Chernoff bounds, 
\begin{align*}
P[\E\{\sharp i=1,2,\ldots,k\colon X_{\tau(i)}= y\}>1/2 ] &=
P\left[\frac 1n\sum_{i=1}^k Y_{\tau(i)} > \frac 12\right] \\
&\geq 1 - \exp(-k/16).
\end{align*}
Thus, with exponentially high confidence, at least a half of the $k$ nearest neighbours of $x$ will be equal to $y$, and consequently $x$ will be labelled $1$. Now we have, using the Fubini theorem ((\cite{fremlin2}, Th. 252B),
\begin{align*}
\mbox{err}_{\mu,\eta}L_n^{k\mbox{\tiny -NN}} &= \mu(\{y\}\Delta \{x\colon L_n^{k\mbox{\tiny -NN}}(X) =1\})\\
&\geq \mu(\{x\colon L_n^{k\mbox{\tiny -NN}}(X) =1\}\setminus \{y\})\\
&=
P[L_n^{k\mbox{\tiny -NN}}(X) =1]-\frac 23\\
 &= \int_{\Omega}P[L_n^{k\mbox{\tiny -NN}}(X) =1\mid X=x]\,d\mu(x)-\frac 23 \\
&\geq \frac 13 - \exp(-k/18) \\
&\to \frac 13\mbox{ when }n\to\infty.
\end{align*}
At the same time, the Bayes classifier is equal to the deterministic regression function $\eta=\chi_{\{y\}}$, so the Bayes error is zero.

The sufficiency ($\Leftarrow$) follows from Theorem \ref{th:measures_metric_space}. Condition (\ref{rvm:2}) says that under our assumption, for any probability measure $\tilde\mu$ on $\Omega\times \{0,1\}$ there is a countable subspace $\Omega^\prime\subseteq\Omega$ such that $\Omega^\prime\times \{0,1\}$ has full measure, and so the $k$-NN classifier is consistent. This follows of course from Theorem \ref{th:cerou_guyader} of C\'erou and Guyader (and Preiss), since a countable metric space is sigma-finite dimensional. But there is no need to use such a strong result, because the measure $\tilde\mu$ is purely atomic, and so asymptotically the $k$ nearest neighbours of each point $x\in\mbox{supp}\,\mu$ will all be equal to $x$ with exponentially high confidence. Now an application of the Law of Large Numbers assures that, whenever $\eta(x)\neq 1/2$, the label of $x$ will converge to the label predicted by the Bayes classifier as $n\to\infty$.
\end{proof}

To handle the case of two-valued measurable cardinals, we need some combinatorial properties of such cardinals. A two-valued probability measure $\mu$ defined on a cardinal $\tau$ is {\em normal} if, whenever the subsets $X_\alpha\subseteq\tau$, $\alpha<\tau$ have measure one, the set
\[\{\alpha<\tau\colon \alpha\in\cap_{\beta<\alpha}X_{\beta}\} \]
(the {\em diagonal intersection} of $X_{\alpha}$)
has measure one. Equivalently (\cite{jech}, Exercise 8.8), $\mu$ is normal if and only if every regressive function $f\colon\tau\to\tau$ (that is, $f(\xi)<\xi$ for all $\xi<\tau$) is constant on a set of full measure.
Every two-valued measurable cardinal admits a normal witnessing measure, that is, a two-valued probability measure $\mu$ defined on all subsets, vanishing on singletons, whose ideal of null sets is closed under unions of $<\tau$ members, and which is in addition normal in the sense of the above definition. (See \cite{jech}, Th. 10.20, or \cite{fremlin}, Th. 1G.)

For a set $X$ we denote, in the usual combinatorial notation, by $[X]^{<\omega}$ the family of all finite subsets of $X$.
Suppose $\tau$ is a two-valued measurable cardinal, and $\mu$ a normal witnessing measure on $\tau$. Then we have the following Ramsey-type property: for every colouring of the set $[\tau]^{<\omega}$ with fewer than $\tau$ colours, there is a subset $X\subseteq\tau$ of $\mu$-measure one such that for every $n$, $[X]^{n}$ is monochromatic. (Theorem by Rowbottom \cite{rowbottom}; cit. by \cite{jech}, Th. 10.22.) In particular, the cardinality of $X$ is $\tau$.

\begin{lemma}
\label{l:exists_subspace}
Let $\Omega$ be a metric space, $\tau$ a two-valued measurable cardinal, and $\abs{\Omega}\geq\tau$. Then there exist $r>0$ and a metric subspace $Y\subseteq\Omega$ of cardinality $\tau$, on which the metric only takes the values $0$ and $r$.
\end{lemma}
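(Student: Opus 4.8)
The plan is to earmark a $\tau$-sized piece of $\Omega$, index it by ordinals, colour each pair of indices by the distance between the corresponding two points, and then feed this colouring into the Rowbottom Ramsey property recorded just above the statement. The monochromatic set of full measure that it produces will be exactly the subspace we are after, the common colour being the value $r$.

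First I would use $\abs{\Omega}\geq\tau$ to fix an injection $\alpha\mapsto x_\alpha$ of $\tau$ into $\Omega$, so that $(x_\alpha)_{\alpha<\tau}$ are pairwise distinct points, and fix a normal witnessing measure $\mu$ on $\tau$ (so that the stated Ramsey property is available). Next I would define a colouring $c$ of $[\tau]^2$ by $c(\{\alpha,\beta\})=d(x_\alpha,x_\beta)$, so that two pairs receive the same colour precisely when the corresponding point-pairs are equidistant. The set of colours is contained in the set of distance values, a subset of $[0,\infty)$, and so has cardinality at most $2^{\aleph_0}$. This is the one place where the hypothesis is genuinely used: since $\tau$ is two-valued measurable it is (much) larger than the continuum, whence $2^{\aleph_0}<\tau$ and $c$ uses fewer than $\tau$ colours. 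To match the exact form of the stated property, I would extend $c$ to a colouring of all of $[\tau]^{<\omega}$ by assigning one fixed additional colour to every finite set that is not a pair; the total number of colours is still strictly below $\tau$.

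Applying Rowbottom's theorem then yields a set $X\subseteq\tau$ with $\mu(X)=1$ such that $[X]^n$ is monochromatic for every $n$, and in particular $[X]^2$ is monochromatic. Let $r$ be the common value of $d(x_\alpha,x_\beta)$ over $\{\alpha,\beta\}\in[X]^2$; since the $x_\alpha$ are distinct, $r>0$. Setting $Y=\{x_\alpha\colon\alpha\in X\}$, the induced metric on $Y$ takes only the two values $0$ (on coinciding points) and $r$ (on distinct points). Finally $\abs{X}=\tau$: because singletons are null and the null ideal is closed under unions of fewer than $\tau$ members, every subset of $\tau$ of cardinality $<\tau$ is null, so a set of measure one must have cardinality $\tau$; hence $\abs{Y}=\tau$, as required.

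The only real obstacle is the colour count, namely verifying that distinct distances yield fewer than $\tau$ colours, and this is precisely secured by the fact that a two-valued measurable cardinal exceeds $2^{\aleph_0}$. Once that is in place, the rest is a direct application of the stated Ramsey property, with no further combinatorial input needed.
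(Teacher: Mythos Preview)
Your proof is correct and follows essentially the same approach as the paper: pick a $\tau$-sized subset, colour pairs by their distance, observe that the colour set has size at most $\mathfrak{c}<\tau$, and apply Rowbottom's theorem to extract the monochromatic subspace. Your write-up is more detailed (explicitly extending the colouring to $[\tau]^{<\omega}$ and justifying $\abs{X}=\tau$ from the measure properties), but the underlying argument is identical to the paper's.
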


\begin{proof}
Select a subset $Z\subseteq\Omega$ of cardinality $\tau$ and colour $[Z]^{2}$ with non-negative reals, by setting the colour of a couple $\{x,y\}\in [Z]^2$ to be $d(x,y)$. Since $\mathfrak c<\tau$, Rowbottom's theorem implies the existence of a subset $Y\subseteq Z$ of cardinality $\tau$ with the desired properties. 
\end{proof}

\begin{theorem}
\label{th:two-valued_non}
Let $\tau$ be a two-valued measurable cardinal, and let $\Omega$ be a metric space with $\abs{\Omega}\geq\tau$. Then the $k$-NN classifier is not universally consistent in $\Omega$.
\end{theorem}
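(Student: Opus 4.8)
The plan is to deduce the theorem from Proposition~\ref{p:two-valued_non-consistent} after passing to a suitable subspace, mirroring the earlier treatment of atomlessly measurable cardinals. First I would apply Lemma~\ref{l:exists_subspace} to $\Omega$ and $\tau$, obtaining a radius $r>0$ and a subspace $Y\subseteq\Omega$ of cardinality $\tau$ whose induced metric takes only the two values $0$ and $r$. In particular $Y$ is uniformly $r$-discrete, so its intrinsic Borel $\sigma$-algebra is the full power set $2^Y$; and since for a subspace the trace of the ambient Borel $\sigma$-algebra coincides with the intrinsic one, we get $\mathcal A\vert_Y=2^Y$.

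Next I would record that a two-valued measurable cardinal is in particular real-valued measurable, so $\tau$ itself is a real-valued measurable cardinal. Hence $\abs Y=\tau$ is \emph{not} strictly less than a real-valued measurable cardinal, and Proposition~\ref{p:two-valued_non-consistent}, applied to the two-valued metric space $Y$, yields that the $k$-NN classifier is not universally consistent in $Y$. Concretely, this step furnishes a probability measure on $2^Y$ vanishing on singletons, together with a regression function, under which the misclassification error stays bounded away from the Bayes error.

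Finally I would push this failure up to $\Omega$ via the contrapositive of Lemma~\ref{l:subspace}. Because the metric of $Y$ is the restriction of that of $\Omega$, the nearest-neighbour radii of Eq.~(\ref{eq:rknn}) and the uniform distance tie-breaking, computed for samples and query points lying in $Y$, are the same whether evaluated inside $\Omega$ or inside $Y$; thus the restriction to $Y$ of the $k$-NN rule on $\Omega$ is literally the $k$-NN rule on $Y$. If the $k$-NN classifier were universally consistent in $\Omega$, then Lemma~\ref{l:subspace} would force its restriction to the measurable subspace $(Y,\mathcal A\vert_Y)=(Y,2^Y)$ to be universally consistent, contradicting the previous paragraph.

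The only points requiring care, and hence the main (if modest) obstacle, are the bookkeeping claims supporting the last two steps: that $\mathcal A\vert_Y=2^Y$, so that the measure constructed on $2^Y$ is a legitimate Borel measure for the subspace $(Y,\mathcal A\vert_Y)$, and that the induced tie-breaking on $Y$ agrees with the one specified in Proposition~\ref{p:two-valued_non-consistent}. Both are immediate from uniform discreteness: all distinct points of $Y$ lie at the common distance $r$, so every set of $k$ nearest neighbours of a point of $Y$ arises from a single unbroken distance tie, resolved by the same uniform rule in either space. With these compatibility facts verified the theorem follows at once.
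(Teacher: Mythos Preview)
Your proposal is correct and follows essentially the same route as the paper's own proof: apply Lemma~\ref{l:exists_subspace} to extract a two-valued metric subspace $Y$ of cardinality $\tau$, invoke Proposition~\ref{p:two-valued_non-consistent} to see the $k$-NN rule is not universally consistent in $Y$, and then use Lemma~\ref{l:subspace} together with the observation that the restriction of the $k$-NN rule from $\Omega$ to $Y$ is the $k$-NN rule on $Y$. Your additional bookkeeping (that $\mathcal A\vert_Y=2^Y$, that two-valued measurable implies real-valued measurable, and that tie-breaking is compatible) is all correct and merely spells out details the paper leaves implicit.
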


\begin{proof}
By Lemma \ref{l:exists_subspace} there is a metric subspace $Y$ of cardinality $\tau$ with a two-valued metric. According to proposition \ref{p:two-valued_non-consistent}, the $k$-NN classification rule is not consistent in $Y$, and since the restriction of the $k$-NN classifier from $\Omega$ to $Y$ is the $k$-NN classifier in $Y$, we conclude by Lemma \ref{l:subspace}.
\end{proof}

\begin{theorem}
\label{th:central}
Let $\Omega$ be a metric space. The $k$-nearest neighbour classifier is universally consistent in $\Omega$ if and only if it is universally consistent in every separable subspace of $\Omega$, and the density of $\Omega$ is less than any real-valued measurable cardinal.
\end{theorem}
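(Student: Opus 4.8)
The plan is to obtain Theorem~\ref{th:central} by assembling the results already in place, since the genuine work lies in Theorem~\ref{th:measures_metric_space}, in Theorem~\ref{th:two-valued_non}, and in the two consistency-obstruction corollaries; the central theorem is then essentially a packaging statement. I would prove the two implications separately.

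For the forward implication, assume the $k$-NN classifier is universally consistent in $\Omega$. That it is then universally consistent in every separable subspace is immediate from Lemma~\ref{l:subspace}, once one recalls (as used in the proof of Theorem~\ref{th:two-valued_non}) that the restriction of the $k$-NN learning rule from $\Omega$ to a subset $Y$ coincides with the $k$-NN learning rule computed inside $Y$, because the nearest-neighbour search only invokes the induced metric. For the density condition I would argue by contraposition: suppose $d(\Omega)\geq\tau$ for some real-valued measurable cardinal $\tau$, which we may take least. Every real-valued measurable cardinal is either atomlessly measurable (hence $\leq\mathfrak c$) or two-valued measurable (hence $>\mathfrak c$), and these alternatives are mutually exclusive. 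If $\tau$ is atomlessly measurable, then $d(\Omega)\geq\tau$ forces the failure of universal consistency by the corollary immediately following Corollary~\ref{c:atomlessly_no_rule} (which itself rests on Corollary~\ref{c:atomlessly_no_rule} together with Lemma~\ref{l:subspace}). If $\tau$ is two-valued measurable, then $\abs{\Omega}\geq d(\Omega)\geq\tau$, and Theorem~\ref{th:two-valued_non} again yields the failure of universal consistency. Either way we contradict the hypothesis, so $d(\Omega)$ must be strictly smaller than every real-valued measurable cardinal.

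For the reverse implication, assume the $k$-NN classifier is universally consistent in every separable subspace of $\Omega$ and that $d(\Omega)$ is less than every real-valued measurable cardinal, i.e. condition~(\ref{rvm:3}) of Theorem~\ref{th:measures_metric_space} holds. Fix an arbitrary Borel probability measure $\tilde\mu$ on $\Omega\times\{0,1\}$ with first marginal $\mu$ on $\Omega$. By the equivalence of conditions (\ref{rvm:3}), (\ref{rvm:1}) and (\ref{rvm:2}) in Theorem~\ref{th:measures_metric_space}, the closed set $\Omega'=\mathrm{supp}\,\mu$ is separable and satisfies $\mu(\Omega')=1$, whence $\tilde\mu(\Omega'\times\{0,1\})=1$. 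I would then observe that, on the full-measure event where every sample point and the query point lie in $\Omega'$, the $k$-NN mapping in $\Omega$ returns exactly the same $k$-subsample, and hence the same label, as the $k$-NN mapping computed inside the subspace $\Omega'$ (again because only the induced metric enters). Consequently $\mbox{err}_{\tilde\mu}{\mathcal L}^{k\mbox{\tiny -NN}}_n$ equals the corresponding error of the $k$-NN rule in $\Omega'$ under the restriction of $\tilde\mu$, and the two learning problems share the same Bayes error. Since $\Omega'$ is a separable subspace, the assumed universal consistency there gives convergence of this error to $\ell^{\ast}(\tilde\mu)$ as $n\to\infty$; as $\tilde\mu$ was arbitrary, the $k$-NN classifier is universally consistent in $\Omega$.

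The main obstacle is the reverse implication, and specifically the reduction of the learning problem on the non-separable space $\Omega$ to its separable full-measure support $\Omega'$. The conceptual content is already delivered by Theorem~\ref{th:measures_metric_space}; what remains is the measure-theoretic bookkeeping needed to justify that the $k$-NN error over $\Omega$ coincides almost surely with the $k$-NN error over $\Omega'$. This needs care precisely because the ambient space is non-separable: one must check that the relevant sets (the coincidence set of the two nearest-neighbour mappings, consistent with the chosen tie-breaking rule) are measurable for the product measures $\tilde\mu^{n}\otimes\tilde\mu$, and that passing to $\Omega'$ leaves the Bayes error unchanged. By contrast, the forward direction is essentially a dichotomy-driven invocation of the previously proved obstructions, the only delicate point being the clean split of real-valued measurable cardinals into the atomlessly measurable and two-valued measurable cases.
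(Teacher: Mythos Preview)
Your proposal is correct and follows the same route as the paper: necessity is obtained from Lemma~\ref{l:subspace} together with the previously established obstructions (the atomlessly-measurable corollary after Corollary~\ref{c:atomlessly_no_rule} and Theorem~\ref{th:two-valued_non}), and sufficiency is obtained from Theorem~\ref{th:measures_metric_space} by reducing to the separable full-measure support. In fact your write-up is more explicit than the paper's one-line proof, which only cites Theorem~\ref{th:two-valued_non} for necessity; your clean dichotomy into the atomlessly measurable and two-valued measurable cases is exactly what is needed, and your discussion of the measure-theoretic bookkeeping in the reverse implication correctly identifies the only point that requires any care.
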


\begin{proof}
Necessity follows from Theorem \ref{th:two-valued_non} and Lemma \ref{l:subspace}, and sufficiency follows from Theorem \ref{th:measures_metric_space}.
\end{proof}

\begin{corollary}
\label{c:sigma-finite}
Let $\Omega$ be a metric space each of whose separable subspaces is sigma-finite dimensional in the sense of Nagata and Preiss. Then the $k$-NN classifier is universally consistent in $\Omega$ if and only if the density of $\Omega$ is less than the smallest real-valued measurable cardinal.
\end{corollary}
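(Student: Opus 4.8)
The plan is to obtain the corollary as an immediate specialization of Theorem \ref{th:central}. That theorem characterizes universal consistency of the $k$-NN classifier in $\Omega$ by the conjunction of two conditions: universal consistency in every separable subspace of $\Omega$, together with the density bound. The strategy is therefore to show that, under the hypothesis of the corollary, the first condition is automatically satisfied, so that the criterion collapses to the density condition alone.

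First I would fix an arbitrary separable subspace $Z\subseteq\Omega$. By hypothesis, $Z$ is sigma-finite dimensional in the sense of Nagata and Preiss. I would then invoke the result of C\'erou--Guyader (Theorem \ref{th:cerou_guyader}) to conclude that the $k$-NN classifier is universally consistent in $Z$. Since every separable subspace of $\Omega$ is handled in this way, the first condition appearing in Theorem \ref{th:central} is fulfilled for $\Omega$. Feeding this into the equivalence of Theorem \ref{th:central} reduces the two-part criterion to the single requirement that the density of $\Omega$ be strictly less than the smallest real-valued measurable cardinal, which is precisely the statement of the corollary.

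The one point requiring care is that Theorem \ref{th:cerou_guyader} is stated for \emph{complete} separable metric spaces, whereas a separable subspace $Z$ of $\Omega$ need not be complete. The cleanest remedy is to use the completeness-free version of the C\'erou--Guyader theorem established in \cite{CKP} (noted in the remark following Theorem \ref{th:cerou_guyader}), which yields universal consistency in any separable sigma-finite dimensional metric space without assuming completeness. An alternative would be to pass to the completion $\hat Z$ and extend each Borel probability measure from $Z$ to $\hat Z$; however, that route forces one to verify that sigma-finite dimensionality survives completion and that the regression function extends compatibly, so the direct completeness-free statement is preferable. This completeness subtlety is the only genuine obstacle; the remainder is a routine substitution into Theorem \ref{th:central}.
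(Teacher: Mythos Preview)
Your proposal is correct and matches the paper's intended argument: the paper states the corollary without proof, treating it as an immediate consequence of Theorem~\ref{th:central} together with the C\'erou--Guyader result (Theorem~\ref{th:cerou_guyader}). Your attention to the completeness hypothesis is well placed, and the paper itself points to the completeness-free version in \cite{CKP} (and \cite{AG}) in the remarks following Theorem~\ref{th:cerou_guyader}, which is exactly the fix you propose.
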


\begin{example}
\label{ex:hedgehog2}
The $k$-NN classifier is universally consistent in a metric hedgehog $J(\tau)$ of spininess $\tau$ (Example \ref{ex:hedgehog}) if and only if $\tau$ is less than any real-valued measurable cardinal.
\end{example}

\begin{example}
Let $\Gamma$ be a set. The $k$-NN classifier is universally consistent in the  normed space $c_{00}(\Gamma)$ (Example \ref{ex:c00}) if and only if $\abs\Gamma$ is strictly less than any real-valued measurable cardinal.
\end{example}

\begin{remark}
We do not know whether every metric space all of whose separable subspaces are sigma-finite dimensional is itself sigma finite-dimensional. Probably it is not the case, but we do not have any example.
\end{remark}

\section*{Addendum: Strengthening a result of Hart \cite{hart}}

This is a comment on the recent work by Ben-David, Hrube{\v s}, Moran, Shpilka, and Yehudayoff (\cite{BHMSY1}, \cite{BHMSY2}), only indirectly related to the main body of the paper (see the Introduction).
The Expectation Maximization (EMX) problem calls to guess, probably approximately correctly, a set $S(\sigma)$ having a nearly full measure on the basis of a random finite unlabelled sample, $\sigma$. The error and confidence of the guess are supposed to be uniformly bounded over a given family of probability measures on the domain (a measurable space). 

In the specific version of the problem considered by the authors, the domain is just any set, $X$, equipped with its full sigma-algebra of subsets, and the family of probability measures in question, $P_a(X)$, consists of all purely atomic measures on $X$. Denote $[X]^{<\infty}$ the family of all finite subsets of $X$. The question is: does there exist a map,
\[S\colon \bigcup_{n=1}^{\infty} X^n\to [X]^{<\infty},\]
with the property that for every $\ve,\delta>0$ there is $n=n(\ve,\delta)$ so that
\[\forall m\geq n,~P[\mu(S(\sigma))>1-\ve]>1-\delta?\]
The central theorem of \cite{BHMSY1,BHMSY2} states that such an $S$ exists if and only if the cardinality of $X$ is less than $\aleph_{\omega}$. In particular, the domain of real numbers, $\R$, admits a solution to the EMX problem over purely non-atomic measures if and only if the continuum equals $\aleph_n$ for some natural $n$, and thus the assertion is independent of ZFC.

The main criticism of the result belongs to Hart \cite{hart}, who has in particular shown that, in the most interesting case $X=\R$, no Borel measurable map $S$ with the above properties can exist. Here we will notice that a map $S$ having those properties cannot even be Lebesgue measurable.

Let $[\I]^m$ denote, for $m\in\N$, the family of all $m$-subsets of the interval given the Vietoris topology. Thus, two finite sets $A$ e $B$ with $m$ elements are $\ve$-close if $A$ is included in the $\ve$-neighbourhood of $B$ e vice versa. Let $\kappa\colon [\I]^{m+1}\to [\I]^m$ be a Lebesgue measurable map having the property $\kappa(\sigma)\subseteq\sigma$. We will identify $[\I]^{m+1}$ with a subset of all elements of $\I^{m+1}$ of the form
\[x=(x_1,x_2,\ldots,x_{m+1}),~x_1< x_2< \ldots < x_{m+1}.\]
The image of $[\I]^{m+1}$ in $\I^{m+1}$ is an open $(m+1)$-simplex, having the Lebesgue measure $1/(m+1)!$. We will denote it by the same symbol, and equip with the $\ell^{\infty}$ distance (which corresponds to the Vietoris distance) until the end of the argument.

Fix any point $x=(x_1,\ldots,x_{m+1})\in [\I]^{m+1}$, and define
\[\ve = \frac 13 \min_{1\leq i<j\leq m+1} d(x_i,x_j)>0.\]

Let $\gamma>0$ be the Lebesgue measure of the open ball $B_{\ve}(x)$ taken in $[\I]^{m+1}$ (seen as a simplex with $\ell^{\infty}$ metric).
According to Luzin's theorem, there is a compact set $K\subseteq [\I]^{m+1}$ having measure $>1-\gamma$ and such that $\kappa\vert_K$ is continuous, thus uniformly continuous. Choose $\delta\leq \ve$ so small that if $\sigma,\tau\in K$ and $d(\sigma,\tau)<\delta$, then $d(\kappa(\sigma),\kappa(\tau))<\ve$.

 Denote
\[K^\prime = K\cap B_{\ve}(x).\]
The set $K^\prime$ has a strictly positive Lebesgue measure. Therefore, there exists a point $y\in K^\prime$ whose $\delta$-neighbourhood has a strictly positive Lebesgue measure (because $K^\prime$ is precompact, so can be covered with finitely many balls of this radius). Denote $K^{\prime\prime}=K^\prime\cap B_{\delta}(y)$. 

Assume without loss of generality that 
\[\kappa(y) = (y_1,y_2,\ldots,y_m),\]
that is, the coordinate $(m+1)$ is removed. (If it is another coordinate, we will just apply a permutation to the simplex and to $K^\prime$. This mapping will of course send the simplex image of $[\I]^{m+1}$ to another subsimplex of $\I^{m+1}$, but it preserves both the Lebesgue measure and the $\ell^{\infty}$-metric.)

For any $z\in B_{\delta}(y)$, we have $d(y,z)<\delta$, and so, if $z\in K^{\prime\prime}$, then $d(\kappa(y),\kappa(z))<\ve$. Consequently, for all $i$, $\kappa(z)_i\in B_{\ve}(\kappa(y)_i)$, and in particular, $\kappa(z)$ is also obtained by removing the last coordinate of $z$. We conclude: for all $z\in K^{\prime\prime}$,
\[\kappa(z) = \pi_{[1,m]}(z),\]
the coordinate projection on the first $m$ coordinates. 

By the Fubini theorem, 
\[\mu(K^{\prime\prime}) = \int_{0}^{1} \mu^{(m)}(\pi_{[1,m]}^{-1}(z)\cap K^{\prime\prime})\, d\lambda(z),\]
and since $\mu(K^{\prime\prime})>0$, for a set of points $z$ of positive measure the set $\kappa^{-1}(z)$ is infinite.

Now the argument is concluded as in \cite{hart}: if there existed a Lebesgue measurable $S\colon \I^{<\omega}\to [\I]^{<\omega}$ giving a solution to the EMX problem for the class of finite sets under all purely atomic measures, then there would exist a Lebesgue measurable finite-to-one compression function $\kappa\colon [\I]^{m+1}\to [\I]^m$, because the choice of a point to remove can be done in a Borel measurable fashion (e.g. by always removing the smallest possible point).

\section*{Concluding questions and remarks}

1. Suppose there exists a consistent learning algorithm for a probability measure space $(X,{\mathcal A},\mu)$. Does it imply that the measure algebra $({\mathcal A},\mu\circ\Delta)$ (equivalently, the Banach space $L^1(\mu)$) is separable?

2. In particular, can one consistently learn in the Loeb measure spaces \cite{loeb}?

3. Does there exist a measurable space $(X,{\mathcal A})$ whose sigma-algebra is not countably generated and which still admits a universally consistent learning rule without any additional set-theoretic assumptions? 
Any Borel space generated by a non-separable metric space admits such a rule (the $k$-NN learning rule with regard to the original metric), but only assuming that the cardinality of the space is less than the smallest real-valued measurable cardinal.

4. Does there exist a learning problem on a standard Borel space that only admits a {\em measurable} solution under additional set-theoretic assumptions on top of ZFC?
The rule for learning inside a class studied in \cite{pestov2010,pestov2013} under the assumption of the Martin Axiom (MA) can be converted into a universally consistent rule using Vapnik's Structural Risk Minimization, but it is unclear if it can be done so that the Martin Axiom becomes a necessary condition for measurability of such a rule.

5. Suppose every separable subspace of a metric space $\Omega$ is sigma-finite dimensional in the sense of Nagata and Preiss. Will $\Omega$ be itself sigma-finite dimensional? Same question, when $\Omega$ is complete.

6. The question of combinatorial characterization of separable metric spaces in which the $k$-NN classification rule is universally consistent remains open.

\section*{References}

\bibliography{learning_hedgehog}

\end{document}